\begin{document}
\title{SMT-Based Dynamic Multi-Robot \\ Task Allocation} 
%
%
\author{Victoria Marie Tuck\inst{1}(\Letter) \and
Pei-Wei Chen\inst{1}\thanks{Denotes significant contribution}(\Letter) \and
Georgios Fainekos\inst{2} \and
Bardh Hoxha\inst{2} \and
Hideki Okamoto \inst{2} \and
S. Shankar Sastry \inst{1} \and
Sanjit A. Seshia \inst{1}}
\authorrunning{V. Tuck et al.}
%
\institute{UC Berkeley, Berkeley CA 94704, USA \email{\{victoria\_tuck, pwchen, shankar\_sastry, sseshia\}@berkeley.edu} \and Toyota Motor North America, Research $\&$ Development, Ann Arbor MI 48105, USA \email{\{georgios.fainekos, bardh.hoxha, hideki.okamoto\}@toyota.com}} 
\maketitle              
\begin{abstract}
Multi-Robot Task Allocation (MRTA) is a problem that arises in many application domains including package delivery, warehouse robotics, and healthcare. In this work, we consider the problem of MRTA for a dynamic stream of tasks with task deadlines and capacitated agents (capacity for more than one simultaneous task). Previous work commonly focuses on the static case, uses specialized algorithms for restrictive task specifications, or lacks guarantees. We propose an approach to Dynamic MRTA for capacitated robots that is based on Satisfiability Modulo Theories (SMT) solving and addresses these concerns. We show our approach is both sound and complete, and that the SMT encoding is general, enabling extension to a broader class of task specifications. We show how to leverage the incremental solving capabilities of SMT solvers, keeping learned information when allocating new tasks arriving online, and to solve non-incrementally, which we provide runtime comparisons of. Additionally, we provide an algorithm to start with a smaller but potentially incomplete encoding that can iteratively be adjusted to the complete encoding. We evaluate our method on a parameterized set of benchmarks encoding multi-robot delivery created from a graph abstraction of a hospital-like environment. The effectiveness of our approach is demonstrated using a range of encodings, including quantifier-free theories of uninterpreted functions and linear or bitvector arithmetic across multiple solvers.

\keywords{Multi-Robot Task Allocation \and
Satisfiability Modulo Theories \and
Capacitated Robots \and
Incremental Solving \and
Cyber-Physical Systems \and
Robotics}
\end{abstract}

\section{Introduction}

Multi-robot systems have the potential to increase productivity by providing point-to-point pickup and delivery services, referring to the assignment to a team of robots of pickup and drop-off locations for transporting items under some optimization criteria and constraints.
Such services have already revolutionised warehouse management \cite{WurmanDAM2008aimag} by eliminating long travel times between locations for workers. New mobile robot systems are being developed for point-to-point pickup and delivery in environments where human-robot interaction is more likely -- such as in healthcare facilities \cite{JeonLK2017,DasEtAl2015jirs,SchueleEtAl2022hri}. 
Even though multi-robot systems in warehouses and healthcare settings share many similarities, the latter require a higher level of assurance. Formal methods provides this level of assurance by finding an assignment to robots if and only if one exists.


In this paper, we study the application of Satisfiability Modulo Theories (SMT)~\cite{barrett-smtbookch21} to the Multi-Robot Task Assignment (MRTA) \cite{ChakraaGLL2023ras,NUNES201755} problem for point-to-point pickup and delivery. We assume that the robots can execute only a limited number of concurrent tasks and assume that tasks are generated online, have a strict deadline, and each require only one robot. Although, as often used in warehouse settings, a gridworld abstraction may be applicable in a healthcare environment, the dynamic nature of the environment is better addressed through coarser abstractions (regions) within which local motion planning can be employed \cite{ShahS2022aamas,UwacuEtAl2022ral}. We therefore represent the environment by a weighted graph with nodes representing regions in the environment and weights the worst case cost (time) to move from region to region without any dynamic obstacles. Dynamic obstacles operating at shorter time scales, e.g., humans walking, could be avoided using local motion planning with safety guarantees \cite{MajdEtAl2021iros,ParwanaEtAl2023arxiv}. Note that this work focuses on solving the high-level planning part and leaves local motion planning and collision avoidance to downstream planners.

Satisfiability Modulo Theories (SMT)~\cite{barrett-smtbookch21} is a generalization of the Boolean satisfiability problem that answers the question whether a formula in first-order logic with background theories is satisfiable. Problems can be encoded as SMT formulas and passed to SMT solvers to determine satisfiability. Such solvers are widely used in industrial-scale applications (e.g.,~\cite{rungta2022billion}).
Given the progress in SMT solving,
our aim is to study the feasibility and scalability of solving the aforementioned instance of the MRTA problem with an SMT formulation. 
There are several reasons for investigating an SMT approach to MRTA: 1) at its core, MRTA is a combinatorial problem with arithmetic constraints, 2) an SMT formulation can be easily adapted to handle different variants of the MRTA problem \cite{ChakraaGLL2023ras,NUNES201755}, e.g., with complex task dependencies \cite{HekmatnejadPF2019case}, and 
3) even satisfying solutions without guaranteed optimality are relevant for this application since hierarchical planning methods \cite{ShahS2022aamas,UwacuEtAl2022ral} can refine a non-optimal high level plan to an optimal  (with respect to distance traveled) local motion plan.

Our contributions are:
\begin{itemize}
    \item a general, SMT-based framework leveraging quantifier-free theories of uninterpreted functions and bitvector or linear arithmetic for dynamic, capacitated MRTA via incremental solving;
    \item an approach to manage complexity by dynamically changing the number of free variables to fit the needs of the problem;
    \item theoretical results of completeness and soundness;
    \item and an experimental analysis of the runtime of our approach across different solvers (cvc5, Z3, and Bitwuzla) for a series of static (one set of tasks) and dynamic (tasks arriving online) benchmarks and showing that solver and setting used affects whether or not incremental solving is beneficial.
    
\end{itemize}

\section{Related Work}


Multi-Robot Task Allocation (MRTA) refers to the class of problems that encompasses many variants of the point-to-point pickup and delivery scheduling and path planning for multi-robot systems. 
For example, pickup and delivery tasks may have deadlines, robots may have to form a team to complete the task, or each robot may have different capacity constraints. For a detailed taxonomy for task allocation problems with temporal constraints, please refer to \cite{NUNES201755}.

Most heuristic-based MRTA algorithms search for a plan for each robot such that all tasks are completed while minimizing some objective. \cite{lopes2016simple} employs a hybrid genetic algorithm where local search procedures are used as mutation operators to solve for tasks with an unlimited fleet of capacity-constrained robots. \cite{sarkar2018scalable} uses a nearest-neighbor based approach to cluster nearby nodes and constructs routes for each of the clusters by mapping it to a traveling salesman problem (TSP) while minimizing overall package delivery time. \cite{chen2021integrated} adopts a marginal-cost heuristic and a meta-heuristic improvement strategy based on large neighborhood search to simultaneously perform task assignment and path planning while minimizing the sum of differences between the actual complete time and the earliest complete time over all tasks. 
The above heuristic-based approaches are often scalable on problems with up to 2000 tasks, but are not able to provide completeness guarantees, i.e., with hard deadlines. 
Moreover, heuristics are often tightly tied to a specific problem setting which makes it non-trivial to extend to other settings.
For a recent review on state-of-the-art optimization-based approaches to the MRTA problem, we direct readers to Chakraa et al. \cite{ChakraaGLL2023ras}.

In contrast, many approaches that provide strong guarantees do not scale well. \cite{okubo2022simultaneous} formulates the MRTA problem as a Mixed-Integer Linear Programming (MILP) problem to simultaneously optimize task allocation and planning in a setting where capacity-constrained robots are assigned to complete tasks with deadlines in a grid world. However, the proposed method suffers in computational performance when problem size grows large -- the approach is able to handle 20 tasks with 5 agents but the execution time is unavailable in the paper. 
\cite{jeon2016vehicle} promises globally optimal solution in a hospital setting by exhaustively searching through possible combinations of locations of interest and choosing capacity-constrained robots with minimum travel distances to complete tasks. The approach is able to solve 197 periodic tasks over a duration of 8 hours, but the runtime information for each solve is unknown. \cite{gavran2017antlab} is an linear-time temporal logic-based approach that provides strong guarantees and appears to scale well. However, its reliance on temporal logic may impact its ability to scale when length of the plans is large in time, whereas we represent time abstractly. Additionally, their approach does not allow for assigning robots new tasks before they have finished previous tasks, which our structure supports.

In this work, we are interested in the specification satisfaction problem, which is similar to the problem of minimizing cost where the cost goes to infinity if any of the constraints are not satisfiable. In contrast to the heuristic-based approach, our proposed approach is able to give completeness guarantees if solutions do not exist, while still achieving superior performance compared to those that give strong guarantees.



\noindent 
\begin{tabularx}{\textwidth}{lX}
    \toprule
    Symbol & Description \\
    \midrule
    $\taskset_j$ & The $j$th set of tasks in a task stream $\taskstream$ \\
    $\tasksetarrivaltime$ & Arrival time for the $j$th set of tasks \\
    $\taskset$ & Cumulative set of tasks until current time \\
    $\taskid$ & Task id for a task \\
    $\indexedtaskarrivaltime$ & Arrival time for task $\task$ \\
    $\taskdeadline$ & Deadline for task $\task$ \\
    $\actionsequence$ & Action sequence of an agent $\agent$ \\
    $\prefix$ & Prefix of length k of agent $\agent$'s action sequence \\
    $\sequenceelement$ & The $k$th element of agent $\agent$'s action sequence \\
    $\plan$ & A plan for a set of agents $\agentset$ and set of tasks $\taskset$ \\
    $\dptuple$ & Action tuple for agent $\agent$ at action point $\dpvar$ \\
    $Loc(\dpid)$ & Converts an action id to a location id \\
    $\assumelist$ & List of assumption vars limiting number of available action points \\
    $\rho$ & Time taken to pick up/drop off items \\
    $D_{min}, D_{max}$ & Min/max number of action points needed by task set $\taskset$ \\
    $\taskstarttime, \taskendtime$ & Start/end time for task $\task$ \\
    $\taskagent$ & Agent that completes task $\task$ \\
    $\capacity$ & Each agent's capacity for tasks at once \\
    $\encoding_j$ & Encoding at the $j$th iteration of the algorithm \\
    $\move, \wait, \pick, \drop$ & Move, wait, pick, drop actions \\
    $\location \in \locationset$ & Location in a set of system locations \\
    $\locationid \in \locationidset$ & Location id in set of location ids corresponding to the location set \\
    $\duration_k(\actionsequence)$ & Duration of a prefix of length $k$ of agent $\agent$'s action sequence \\
    $\sequenceload_k(\actionsequence)$ & Load of an agent $\agent$ at the $k$th element in their action sequence \\
    
    \bottomrule
\end{tabularx}
\section{Problem Formulation}

We use $\mathbb{Z}_{++}$ to denote the set of strictly positive integers.

\subsection{Workspace Model}

We assume we are given a finite set of designated system locations $\location \in \locationset$ each with a unique id $\locationid \in \locationidset \subset \mathbb{Z}_+$ where $\location \in \rtwo$. For example, each system location $\location$ is a spot in a building where a robot can start, pick up an object, or drop off an object. We are given a complete, weighted, undirected graph $\graph = (V, E)$ 
where $V=\locationidset$ and $E=\{(\locationid_i, \locationid_j, w_{i,j}) | \locationid_i \in \locationidset, \locationid_j \in \locationidset, w_{i,j} \in \mathbb{Z}_+\}$ where $w_{i,j}$ is 0 \textit{if and only if} $\locationid_i=\locationid_j$. The weight of the edge between vertices $\locationid_i$ and $\locationid_j$ is $w_{i,j}$. This weight $w_{i,j}$ denotes the travel time between the two points, which satisfies the triangle inequality with respect to all other sites $\location_k$, i.e., $w_{i,k} + w_{k,j} \geq w_{i,j} \ \forall \locationid_k \in \locationidset$.




\subsection{System Model}

A task $\task$ is a tuple $\tasktuple$ where $\taskid \in \mathbb{Z}_+$ is the task's unique id, $\taskstartloc, \taskendloc \in \locationidset$ are the starting and ending location ids, respectively, and $\taskarrivaltime, \taskdeadline \in \mathbb{Z}_{+}$ are the arrival time and deadline, respectively. $i$ and $f$ stand for initial and final, respectively. Each task is to move a corresponding object, which takes up one unit of capacity. 

Sets of tasks arrive as a sequence of incoming tasks $\taskstream$ called the task stream. Each entry of the task stream is a tuple $\indexedtasksettuple$. The first entry $\taskset_j$ of the tuple is an ordered set of tasks, and the second entry is the arrival time of the set. The arrival time $\taskarrivaltime$ for any task in the set is the same as the set's arrival time  $(\tasksetarrivaltime = \taskarrivaltime \ \forall \task \in \indexedtaskset)$. 
We use the same time in both contexts to more easily reference the arrival time depending on if we are reference a task in the set or the entire set. Assume $\tasksetarrivaltime[0]=0$. We assume the stream is finite with a known total number of tasks $\numtasks_{max}$. We require that this sequence is monotonically increasing with respect to the second element of $\indexedtasksettuple$. Let $\indexedcumtaskset = \bigcup \nolimits_{j'=0}^j \taskset_{j'}$ be the total set of tasks that have arrived. We constrain the task id of the first task to be zero and all following tasks to have ids that increment by one. More formally, $\forall \task \in \indexedtaskset$, $\taskid \in \{|\cumtaskset_{j-1}|, \ldots, |\indexedcumtaskset|-1\}$ with $\cumtaskset_{-1} = \emptyset$. We use $\numtasks = |\indexedcumtaskset|$ for the current total number of tasks where $\numtasks$ will change with the context. We will sometimes notate a set of tasks as $\taskset$. We require that the unique task ids start at 0 and increase by 1 for every new task that arrives.

There exists a finite, zero-indexed, ordered set $\agentset$ of $\numagents$ agents. Each agent $\agent$ has a unique id $\agentid \in \{0, \ldots, N-1\}$ and a starting position $\agent_{\location} \in \locationset$ that may not be unique. Each agent has a capacity $\capacity$ for tasks at one time.

We define a set of actions $\actionsset$ that a robot can take as $\actionsset = (\move, \locationid) \cup (\{\pick, \drop\},$
$\taskid)$ $\cup (\wait, \waittime)$. 
The action $(\move, \locationid)$ designates that the robot is moving to location $\location$ with id $\locationid$, $(\{\pick, \drop\}, \taskid)$ designates that the robot is picking up ($P$) or dropping off ($D$) the task $m$ with id $\taskid$, and $(\wait, \waittime)$ has the agent wait in the same position for a time $\waittime \in \mathbb{Z}_{++}$. We use the short-hand that $\pick_{\mu} = (\pick, \mu)$ and similar for drop. Each of $\{\moveset, \waitset, \pickset, \dropset\}$ represent all actions of that type, e.g. $\moveset = \{(M, \locationid) | \locationid \in \locationidset$\}. We assume the pick and drop actions each take a pre-specified time of $\rho \in \mathbb{Z}_{++}$.


The following definitions are used to define types of plans and the goals of our algorithm. Figure \ref{fig:example} shows the input, example output plans, and an example workspace. We use $\prefix$ to denote the prefix of length $\sequenceindex$ of agent $\agent$'s action sequence. $\indicatorfunc{\alpha}$ is an indicator function that is $1$ when $\alpha$ is true and $0$ otherwise.

\begin{figure}
    \centering
    \includegraphics[width=\linewidth]{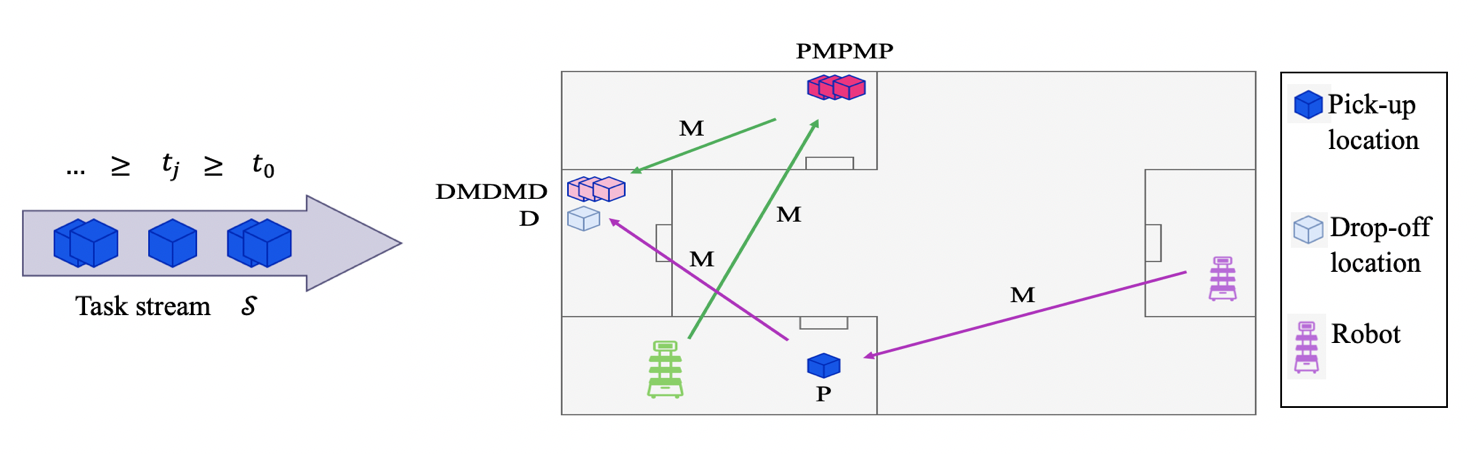}
    \caption{A task stream of sets of tasks arrives with monotonically increasing arrival times. Five system sites are shown. Example tasks and robot paths are shown on the right with $\pick$, $\drop$, and $\move$ used to represent the actions succinctly. The result action sequence for the right robot is (M,P,M,D) and (M,P,M,P,M,P,M,D,M,D,M,D). The moves between picks or drops at the same location are used to keep a consistent structure in the plan but take no time.}
    \label{fig:example}
\end{figure}

\begin{definition}
    \textbf{Action sequence.} An action sequence $\actionsequence$ is a finite sequence beginning with element $k=1$ where each element $\sequenceelement \in \actionsequence$ is an action ($\sequenceelement \in \actionsset$).
\end{definition}

\begin{definition}
    \textbf{Plan.} A \textit{plan} $\plan$ for the set of agents $\agentset$ and set of tasks $\taskset$ is an action sequence $\actionsequence$ with length $\sequenceindex_{\agent}$ for each agent $\agent$.
\end{definition}

\begin{definition}
    \textbf{Duration of an action sequence.} We compute the duration of a prefix $\prefix$ of $\actionsequence$ as 
    $$\duration_{\sequenceindex}(\actionsequence) = \Sigma_{l = 1}^{k} w_{\sigma_{l-1}, \sigma_{l}} \indicatorfunc{\sequenceelement[l] \in \moveset} + \rho ( \indicatorfunc{\sequenceelement[l] \in \pickset} + \indicatorfunc{\sequenceelement[l] \in \dropset}) + \waittime_{l} \indicatorfunc{\sequenceelement[l] \in \waitset}$$ 
    where $\sigma_0 = \agent_s$ and $\sigma_l$ and $\waittime_l$ are the location ids and times, respectively, of action $l$. $\sigma_{l}=\taskstartloc$ for a pick action and $\taskendloc$ for a drop action. $\sigma_{l}$ for a wait is the most recent location.
\end{definition}

\begin{definition}
    \textbf{Load of an action sequence.} We compute the load of a prefix $\prefix$ of $\actionsequence$ as $\sequenceload_k(\actionsequence) = \Sigma_{j=1}^k (\indicatorfunc{\sequenceelement[j] \in \pickset} - \indicatorfunc{\sequenceelement[j] \in \dropset})$.
\end{definition}

\begin{definition}
    \textbf{Consistent action sequence.} A plan $\plan$ is made of consistent action sequences $\allconsistentpredicate(\plan)$ if for each agent's sequence it starts with a move or wait; no capacity constraints are violated; any pick ($\pick_{\task} = (\pick, \taskid)$) and drop ($\drop_{\task} = (\drop, \taskid)$) actions for a task $\task$ are immediately preceded by the move to that point (A move that will require no time is still added if the agent "moves" to its current location.); pick precedes drop; drop follows pick; no two moves occur in a row; and any object in a sequence is only picked and dropped once ($\consistentpredicate$). An empty sequence is consistent. 
    \begin{align}
        \phi_{1}(\actionsequence) = &  \bigg(\actionsequence = \emptyset \bigg) \ \bigvee  \\
        &\bigg( \bigg(\sequenceelement[1] \in \moveset \cup \waitset \bigg) \wedge \bigg( \forall \kappa=1, ..., \sequenceindex_{\agent} \ (0 \leq \seqcapacity_{\kappa}(\actionsequence)) \wedge (\seqcapacity_{\kappa}(\actionsequence) \leq \capacity) \bigg) \notag
        \\
        & \wedge \bigg(\sequenceelement[\sequenceindex_{\agent}] \notin \moveset \bigg) \wedge \bigg(\forall \kappa=1, ..., \sequenceindex_{\agent}-1 \ \sequenceelement[\kappa] \in \moveset \Rightarrow \sequenceelement[\kappa + 1] \notin \moveset \bigg) \notag
        \\
        & \bigwedge \bigg(\forall \task \in \taskset, \ \forall \kappa=2, ..., \sequenceindex_{\agent} \  \bigg( (\sequenceelement[\kappa] = \pick_{\task}) \Rightarrow ((\prevsequenceelement[\kappa - 1] = \move_{\location_{i,m}}) \notag \\
        & \wedge (\bigvee \nolimits_{\kappa' > \kappa} \sequenceelement[\kappa'] = \drop_{\task}) \wedge (\forall \kappa' \neq \kappa \in 1, \ldots  \sequenceindex_{\agent}, \ \sequenceelement[\kappa'] \neq \pick_{\task})) \bigg) \notag \\
        &  \wedge \bigg( (\sequenceelement[\kappa] = \drop_{\task}) \Rightarrow ((\prevsequenceelement[\kappa - 1] = \move_{\location_{f,m}}) \wedge (\bigvee \nolimits_{\kappa' < \kappa} \sequenceelement[\kappa'] = \pick_{\task}) \notag \\
        & \wedge (\forall \kappa' \neq \kappa \in 1, \ldots  \sequenceindex_{\agent}, \ \sequenceelement[\kappa'] \neq \drop_{\task})) \bigg) \bigg) \bigg) \notag
        \\
        \allconsistentpredicate(\plan) = & \forall \actionsequence \in \plan, \ \consistentpredicate(\actionsequence) 
    \end{align}
\end{definition}

\begin{definition}
    \textbf{Completed task.} Tasks with id $\taskid$ in taskset $\taskset$ are completed $\allcompletedpredicate[\taskset]$ by a plan $\plan$ if for each task there exists a single agent $\agent$ with a consistent action sequence $\actionsequence \in \plan$ that picks and drops the action. Additionally, the drop action must be before the deadline, and the time before moving to the pick action must be greater than or equal to the start time $\taskarrivaltime$ ($\completepredicate$).
    We define the predicates $\phi_{pickup}(\task, \actionsequence)$ and $\phi_{dropoff}(\task, \actionsequence)$ as
    \begin{align}
        \phi_{pickup}(\task, \actionsequence) = & (\exists \kappa_{p} \in [1, \sequenceindex_{\agent}] \text{ st. } (\sequenceelement[\kappa_{p}] = \pick_{\task})) \\
        \phi_{dropoff}(\task, \actionsequence) = & (\exists \kappa_{d} \in [1, \sequenceindex_{\agent}] \text{ st. }(\sequenceelement[\kappa_{d}] = \drop_{\task}))
    \end{align}
    \begin{align}
        \completepredicate(\plan) = &  \ \exists \actionsequence \in \plan
        \\
        st. \ & \consistentpredicate(\actionsequence) \wedge \phi_{pickup}(\task, \actionsequence) \wedge \phi_{dropoff}(\task, \actionsequence) \notag \\
        & \wedge \forall \agent' \in \agentset \text{st.} \ \agent' \neq \agent, \ \neg \phi_{pickup}(\task, \otheractionsequence) \wedge \neg \phi_{dropoff}(\task, \otheractionsequence) \notag 
        \\
        & \wedge (\duration_{\kappa_{d}}(\actionsequence) \leq \taskdeadline) \wedge (\duration_{\kappa_{p-2}}(\actionsequence) \geq \taskarrivaltime) \notag
        \\
        \allcompletedpredicate[\taskset](\plan) = & \forall \task \in \taskset, \ \completepredicate(\plan)
    \end{align}
\end{definition}

\begin{definition}
    We define $\Pi \models \Phi$ to hold when $\Phi(\Pi)$ is true.
\end{definition}

\begin{definition}
    \textbf{Valid Plan.} We define a valid plan $\plan$ for the set of agents $\agentset$ and set of tasks $\taskset$ as one where each agent's $\actionsequence$ is consistent and all tasks $\task \in \taskset$ are completed. An empty plan is considered valid when $\taskset = \emptyset$. This can be written as $\plan \models \allconsistentpredicate \wedge \allcompletedpredicate[\taskset]$.
\end{definition}

\begin{definition}
    \textbf{Updated plan.} A valid plan $\hat{\plan}$ with agent sequence lengths $\hat{\sequenceindex_{\agent}}$ is updated at a time $t$ from a previous valid plan $\plan$ with agent sequence lengths $\sequenceindex_{\agent}$ if the following conditions hold. In each agent's action sequence there is an equivalent prefix $\primeprefix[\kappa]$ to that in $\plan$ where the prefix is all of $\actionsequence$ in $\plan$ followed by a wait action or $(\sequenceelement[\kappa] \in \pickset \cup \dropset) \wedge \duration_{\sequenceindex}(\actionsequence) \geq t$. This means that past actions and the current action are unchanged. A plan can only be updated at a system location $\location$. A valid initial plan $\hat{\plan}$ is always considered updated from an empty previous plan $\plan$: $\forall \agent \in \agentset, \actionsequence = ()$. This empty plan will sometimes be notated as $\plan_{-1}$. We also require that each agent's action sequence is efficient and does not contain extra waits. 
    \begin{align}
        \updatedpredicate(\hat{\plan}) = & \ \forall \agent \in \agentset, \bigg((\prefix[\sequenceindex_{\agent}] = \primeprefix[\sequenceindex_{\agent}]) \\
        & \wedge \bigg( (\actionsequence = \primeactionsequence) \vee (\hat{s}^{\sequenceindex_{\agent}+1}_{\agent} = (\wait, t - \duration_{[\sequenceindex_{\agent}]}(\actionsequence))) \bigg) \bigg) \notag
        \\
        & \bigvee \ \bigg( \bigg( \exists \kappa = 1, \ldots, \sequenceindex_{\agent}. \ (\prefix[\kappa] = \primeprefix[\kappa]) \wedge (\duration_{\kappa}(\actionsequence) \geq t) \wedge (\sequenceelement[\kappa] \in \pickset \cup \dropset ) \bigg) \notag \\ 
        & \wedge \bigg(\forall \kappa \ \text{st.} \ \duration_{\kappa}(\hat{\actionsequence}) \geq t, \sequenceelement[\kappa] \notin \waitset \bigg) \bigg) \notag
    \end{align}
\end{definition}

\begin{definition} \label{def:soundness}
    \textbf{Soundness.} Let $\requirement = \allconsistentpredicate \wedge \allcompletedpredicate \wedge \indexedupdatedpredicate[\tasksetarrivaltime][\plan_{j-1}]$. An algorithm is \textit{sound} for a given finite task stream of length $\taskstreamlength$ if $ \ \forall j=0, \ldots, \taskstreamlength, \ (result_j = sat) \Rightarrow (\plan_j \models \requirement)$.
\end{definition}

\begin{definition} \label{def:completeness}
    \textbf{Completeness.} An algorithm is complete for a given finite task stream of length $\taskstreamlength$, if $ \ \forall j=0, \ldots, \taskstreamlength, \ (\plan_{j} \models \requirement[j]) \Rightarrow (result_{j} = sat)$.
\end{definition}

\subsection{Problem Statement}

Given a set of agents $\agentset$, task stream $\taskstream$ including a known number of total tasks, and travel time graph $\graph$, after each element $j$ in the task stream arrives at $t_j$, find a valid plan $\plan_j$ updated from a previous plan $\plan_{j-1}$ if one exists.

\section{Summary of Approach}

\subsection{Preliminaries} 
We assume a basic understanding of propositional and first-order logic. \textit{Satisfiability Modulo Theories} (SMT), a generalization of the Boolean satisfiability problem, is the satisfiability problem for formulas with respect to a first-order theory, or combinations of first-order theories. SMT solvers, such as Z3~\cite{de2008z3} and cvc5~\cite{DBLP:conf/tacas/BarbosaBBKLMMMN22}, are used to solve SMT formulas, where a model is returned if the SMT formula is satisfiable or otherwise reports unsatisfiability. 
Below we briefly introduce three theories of interest in the paper.

The theory of Equality Logic and Uninterpreted Functions (\textbf{EUF}) introduces the binary equality (=) predicate and uninterpreted functions, which maintains the property of functional congruence stating that function outputs should be the same when function inputs are the same.
The theory of BitVectors (\textbf{BV}) incorporates fixed-precision numbers and operators, e.g. the bitwise AND operator. Note that the modulus operator with a constant 2 can be replaced by a bitwise AND operation using a constant 1.
The theory of Linear Integer Arithmetic (\textbf{LIA}) introduces arithmetic functions and predicates and constrains variables to only take integer values. Note that $v = x \mod k$, where $x$ is a variable and $k$ and $v$ are constants, can be translated into $x = k \cdot n + v$ with a fresh integer variable $n$. In this paper, we only consider quantifier-free (QF) SMT formulas, and abbreviate Quantifier-Free Bit Vector Theory (Linear Integer Arithmetic) with Uninterpreted Functions as QF\_UFBV (QF\_UFLIA). 

\subsection{Encoding Literals}

We introduce the notion of an \textbf{action id} $i$ to describe what action an agent is taking. The first $\numagents$ action ids represent going to the corresponding agent number's starting position. For each following pair of values $j$, the values correspond to picking up and dropping off the $j$th object, respectively. 

Each agent $\agent$ in $\agentset$ is allowed $\numdp$ action tuples where a tuple for an action point $d$ = $\dptuple$. $\dpid$ is the id of the action being taken, $\dptime$ is the time by which the action corresponding to the action id $\dpid$ has been completed, and $\dpload$ is the agent's load at the time $\dptime$ after completing the action with id $\dpid$.

For each task $\task \in \taskset$, we define the task start $\taskstarttime$ and end $\taskendtime$ times with agent $\agent$ completing the task $\taskagent$. This creates a tuple $(\taskstarttime, \taskendtime, \taskagent)$.

\subsection{Incremental Solving}
For tasks arriving online, we must re-solve our SMT problem given the new tasks. We implement this using push and pop functionalities provided by SMT solvers to retain information about previous solves. We push constraints onto the stack and pop them before adding new tasks as explained in AddTasks(). Past action points are constrained to be constant in SavePastState().

We use assumptions-based incremental solving to adjust the number of action points used. This allows us to force the solver to start with the minimum number necessary and search for the sufficient number needed for a sat result if one exists. We assume we are either given a list $\dplist$ of action point counts or will construct a reasonable list by starting with the $D_{min}(\taskset) = 2\lceil\frac{|\taskset|}{\numagents} \rceil$ and incrementing by two to $D_{max}=2\numtasks_{max} + 1$ inclusive. The last value of the user provided list must be $D_{max}$. More about $D_{max}$ is explained in section \ref{sec:theoretical}. From this, we define an assumption list $\assumelist$ of booleans the same length as $\dplist$. An element $\assumelist[k]$ is used as an input to the solver to designate how many action points to use. $D = D_{max}$ in the encoding to allow for using the max number of action points if necessary to find a satisfying assignment.

\subsection{SMT Encoding}

Fig. \ref{fig:encoding} includes the types of constraints that are used in our encoding. We include all for the initial solve and then some are iteratively removed and added for additional solves as explained in Algorithm \ref{alg:overall}. Assume $t_{max}$ is set large enough to be greater than the maximum deadline of any task to appear plus maximum travel time between any two points. For our completeness argument, we will assume that $\numdp = \numdp_{max} = 2\numtasks_{max} + 1$.

For readability, we define the following symbols:
\begin{align}
    \pickup \ = & \ ITE((\dpid \ \& \ 1 = mod(\numagents, 2)) \wedge (\dpid > \numagents), 1, 0) \label{eq:pick}\\
    \dropoff \ = & \ ITE((\dpid \ \& \ 1 = abs(mod(\numagents, 2) - 1)) \wedge (\dpid > \numagents), 1, 0) \label{eq:drop}\\
    \droplater \ = & \ ( {\bigvee \nolimits_{\dpvar' = \dpvar + 1}^{D - 1} \dpid[\agent][\dpvar'] = 2\taskid + \numagents + 1} ) \vee False \label{eq:droplater} \\
    \agentstarts \ = & \ \bigvee \nolimits_{\dpvar = 1}^{\numdp - 1} \dpid = 2\taskid + \numagents \label{ eq:agentstarts} \\
    \validid \ = & \ (\numagents \leq \dpid) \wedge (\dpid < 2\numtasks + \numagents) \label{eq:valid}
\end{align}

\ref{enc:init_and_allowed_dp} initializes the action tuples; \ref{enc:stay_home}, \ref{enc:load_tracking_and_doing_tasks}, and \ref{enc:doing_tasks} relate action tuple pairs; \ref{enc:room_UF_agents_and_dist_UF} restricts the uninterpreted functions, \ref{enc:bound_load_bound_id_go_home} and \ref{enc:at_home_or_doing_task_with_id} bound action point values, \ref{enc:room_UF_task_rooms} restricts the task uninterpreted functions, \ref{enc:started_task} - \ref{enc:assigned_agent} relate action and task tuples; and \ref{enc:valid_agent_and_times} restricts task tuples.
Note that $mod(N, 2)$ and $abs(mod(N, 2) - 1)$ are pre-computed and are within \{0, 1\}.

\begin{figure}[htb]
    \centering
    \begin{align}
        \baseencoding & (\agentset, \graph, \assumelist, \numdp, \capacity, t_{max}) = \notag \\
        \big(& \conjunctionoveragents \dptuple[\agent][0] = (\agentid, \initload, \inittime) \conjunctionoverdplist \assumelist[\dpindex] \Rightarrow (\dpid[\agent][\dplist[\dpindex]] = \agentid) \label{enc:init_and_allowed_dp} 
        \tag{E.\theencodingcounter}
        \\
        & \conjunctionoveragents \bigwedge \nolimits_{\dpvar = 1}^{\numdp - 2} (\dpid = \agentid) \Rightarrow (\nextdpid = \agentid) \label{enc:stay_home} 
        \stepcounter{encodingcounter} \tag{E.\theencodingcounter}
        \\
        & \conjunctionoveragents \conjunctionoveractionpoints (\dpload = \prevdpload + \pickup - \dropoff) \wedge (\neg(\dpid = \agentid) \Rightarrow (\neg(\prevdpid = \dpid))) \label{enc:load_tracking_and_doing_tasks}
        \stepcounter{encodingcounter} \tag{E.\theencodingcounter}
        \\
        & \conjunctionoverlocations[\locationid_1] \conjunctionoverlocations[\locationid_2] Dist(\locationid_1, \locationid_2) = w_{\locationid_1, \locationid_2} \conjunctionoveragents Loc(\agentid) = n_s \label{enc:room_UF_agents_and_dist_UF}
        \stepcounter{encodingcounter} \tag{E.\theencodingcounter}
        \\
        & \conjunctionoveragents
        \conjunctionoveractionpoints (\dpload \geq 0) \wedge (\dpload \leq \capacity) \wedge (\dpid \geq 0) \wedge (\dpid = \agentid) \Rightarrow (\dptime = t_{max}) \big) \label{enc:bound_load_bound_id_go_home}
        \stepcounter{encodingcounter} \tag{E.\theencodingcounter}
        \\
        \updateencoding & (\agentset, \delta, \tasksetarrivaltime, \numdp, \numtasks)  = \notag \\
        \big(& \conjunctionoveragents \bigwedge \nolimits_{\dpvar = \delta_{\agent}}^{\numdp - 1} (\dpid \geq \numagents) \Rightarrow \notag \\
        & (\dptime = ITE(\prevdptime \leq t_j, t_j, \prevdptime) + Dist(Loc(\prevdpid), Loc(\dpid)) + \rho) \label{enc:doing_tasks}
        \stepcounter{encodingcounter} \tag{E.\theencodingcounter}
        \\
        & \conjunctionoveragents \bigwedge \limits_{\dpvar = 1}^{\numdp - 1} \neg(\dpid = \agentid) \Rightarrow (\validid)\big) \label{enc:at_home_or_doing_task_with_id}
        \stepcounter{encodingcounter} \tag{E.\theencodingcounter}
        \\
        \taskencoding & (\agentset, \taskset, \numdp) = \notag \\
        \big(& \conjunctionovertasks (Loc(2\taskid + \numagents) = \taskstartloc) \wedge (Loc(2\taskid + \numagents + 1) = \taskendloc) \label{enc:room_UF_task_rooms} 
        \stepcounter{encodingcounter} \tag{E.\theencodingcounter}
        \\
        & \conjunctionoveragents \conjunctionoveractionpoints \conjunctionovertasks (\dpid = 2\taskid + \numagents) \Rightarrow ((\taskstarttime = \dptime) \wedge \droplater) \label{enc:started_task} 
        \stepcounter{encodingcounter} \tag{E.\theencodingcounter}
        \\
        & \conjunctionoveragents \conjunctionoveractionpoints \conjunctionovertasks (\dpid = 2\taskid + 1 + \numagents) \Rightarrow (\taskendtime = \dptime) \wedge (\taskagent = \agentid) \label{enc:ended_task} 
        \stepcounter{encodingcounter} \tag{E.\theencodingcounter}
        \\
        & \conjunctionoveragents \conjunctionovertasks (\taskagent = \agentid) \Rightarrow \agentstarts \label{enc:assigned_agent} 
        \stepcounter{encodingcounter} \tag{E.\theencodingcounter}
        \\
        & \conjunctionovertasks (\taskagent \geq 0) \wedge (\taskagent < \numagents) \wedge (\taskstarttime \geq \taskarrivaltime + \rho) \wedge (\taskendtime \leq \taskdeadline)\big) \label{enc:valid_agent_and_times} 
        \stepcounter{encodingcounter} \tag{E.\theencodingcounter}
    \end{align}
    \caption{Constraints used in the encoding developed in Algorithm \ref{alg:overall}}
    \label{fig:encoding}
\end{figure}

\subsection{Overall Algorithm \label{sec:algo}}

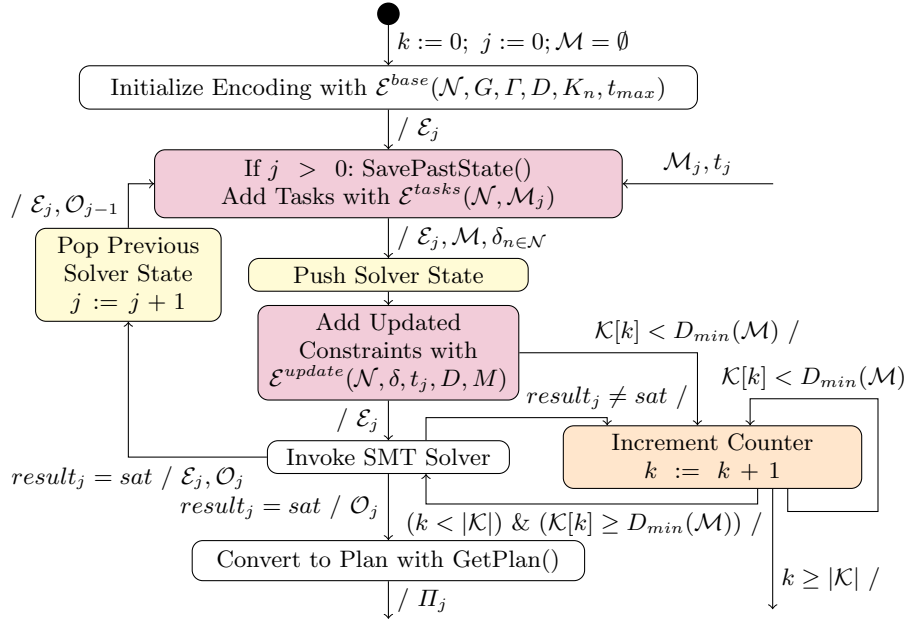
\begin{figure}
\centering
    \begin{tikzpicture}
        \node[draw, rectangle, fill=white!20, text width=8cm, align=center, rounded corners] (initNode) {Initialize Encoding with $\baseencoding(\agentset, \graph, \assumelist, \numdp, \capacity, t_{max})$};
        \tikzstyle{blackcircle} = [circle, fill=black, minimum size=0.3cm, inner sep=0pt]
        \node[blackcircle, above=15pt of initNode] (initcircle) {};
        \node[draw, rectangle, fill=purple!20, below=15pt of initNode, text width=6cm, align=center, rounded corners] (addTasksNode) {If $j>0$: SavePastState() \\ Add Tasks with $\taskencoding(\agentset, \taskset_j)$};
        \node[draw, rectangle, fill=yellow!20, below=15pt of addTasksNode, text width=3.5cm, align=center, rounded corners] (pushNode) {Push Solver State};
        \node[draw, rectangle, fill=purple!20, below=5pt of pushNode, text width=3.25cm, align=center, rounded corners] (reAddConstraintsNode) {Add Updated Constraints with $\updateencoding(\agentset, \delta, \tasksetarrivaltime, \numdp, \numtasks)$};
        \node[draw, rectangle, fill=white!20, below=15pt of reAddConstraintsNode, text width=3cm, align=center, rounded corners] (solveNode) {Invoke SMT Solver};
        \node[draw, rectangle, fill=orange!20, right=20pt of solveNode, text width=3.75cm, align=center, rounded corners] (assumptionNode) {Increment Counter \\ $k := k + 1$};
        \node[draw, rectangle, fill=white!20, below=25pt of solveNode, text width=5cm, align=center, rounded corners] (planNode) {Convert to Plan with GetPlan()};
        \node[draw, rectangle, fill=yellow!20, left=10pt of pushNode, text width=2.25cm, align=center, rounded corners] (removeConstraintsNode) {Pop Previous Solver State \\ $j := j + 1$};
        \draw[->] ($(initNode.north)+(0,0.6)$) -- (initNode.north) node[midway, right] {$k:=0; \ j:=0; \taskset = \emptyset$};
        \draw[->] (initNode.south) -- (addTasksNode) node[midway, right] {$\slash \ \encoding_j$};
        \draw[->] (addTasksNode) -- (pushNode) node[midway, right] {$\slash \ \encoding_j, \taskset, \delta_{\agent \in \agentset}$};
        \draw[->] ($(addTasksNode.east)+(2,0)$) -- (addTasksNode.east) node[midway, above] {$\taskset_j, t_j$};
        \draw[->] (pushNode) -- (reAddConstraintsNode) node[midway, right] {};
        \draw[->] (reAddConstraintsNode) -- (solveNode) node[midway, left] {$\slash \ \encoding_j$};
        \draw[->] (reAddConstraintsNode) -| ($(assumptionNode.north)+(-0.2,0)$) node[midway, above] {$\dplist[\dpindex] < \numdp_{min}(\taskset) \ \slash$};
        \draw[->] ($(solveNode.north)+(0.5,0)$) |- ++(0.3,0.3) -| ($(assumptionNode.north)+(-1.4,0)$) node[midway, above] {$result_j \neq sat \ \slash$};
        \draw[->] ($(assumptionNode.south)+(0.6,0)$) |- ++(-0.2, -0.2) -| ($(solveNode.south)+(0.5,0)$) node[near start, below] {$(k < |\dplist|) \ \& \ (\dplist[\dpindex] \geq \numdp_{min}(\taskset)) \ \slash$};
        \draw[->] (solveNode) -- (planNode) node[midway, left] {$result_j = sat \ \slash \ \modeloutput_j$};
        \draw[->] (solveNode.west) -| (removeConstraintsNode) node[midway, below] {$result_j = sat \ \slash \ \encoding_j, \modeloutput_j$};
        \draw[->] (removeConstraintsNode) |- (addTasksNode.west) node[near start, left] {$\slash \ \encoding_j, \modeloutput_{j-1}$};
        \draw[->] (planNode.south) -- ($(planNode.south)+(0,-0.5)$) node[midway, right] {$\slash \ \plan_j$};
        \draw[->] ($(assumptionNode.south)+(0.8,0)$) -- ($(assumptionNode.south)+(0.8,-1.6)$) node[near end, right] {$k \geq |\dplist| \ \slash $};
        \draw[->] ($(assumptionNode.south)+(1,0)$) |- ++(1, -0.3) -| ++(0.2, 1.5) -|($(assumptionNode.north)+(0.5,0)$) node[near start, above] {$\dplist[\dpindex] < \numdp_{min}(\taskset)$};
    \end{tikzpicture}
    \caption{Flow chart of Algorithm $\ref{alg:overall}$ starting at the black circle. $A \ \slash \ B$ denotes a conditional $A$ and a changed variable $B$. On the right is the assumption block to manage problem size. The third row from the top (in yellow) shows pushes and pops of the solver state.}
    \label{fig:alg_flowchart}
\end{figure}

Algorithm \ref{alg:overall} defines the overall algorithm. Figure \ref{fig:alg_flowchart} shows a flowchart of Algorithm \ref{alg:overall}. We define the following functions for use in the algorithm:
\begin{itemize}
    \item $\text{Push($\encoding$), Pop($\encoding$)}$: Refers to pushes and pops for incremental solving.
    \item $\text{Solve($\encoding, \assume$)}$: Solves the encoding $\encoding$ assuming $\assume \in \assumelist$ returning the Result $\in$ {sat, unsat, unknown} and satisfying assignment $\modeloutput$ if available.
    \item $\text{GetPlan}(\modeloutput)$: Taking each agent's action points in order, for each action point $d$ after the 1st where $\dpid \neq \agentid$, let $\taskid = \lfloor \frac{\dpid - \numagents}{2} \rfloor$. If $\dptime - \prevdptime > w_{Loc(\dpid), Loc(\prevdpid)} + \rho $, add $(W, \dptime - w_{Loc(\dpid), Loc(\prevdpid)} - \rho)$. Add $(M, \taskstartloc)$, $(P, \taskid)$ if $mod(\dpid - \numagents, 2) = 0$ and $(M, \taskendloc), (D, \taskid)$ if $mod(\dpid - \numagents, 2) = 1$.
    \item $\text{SavePastState($\tasksetarrivaltime, \encoding, \modeloutput$)}$: For each agent, loop from $d=0$ to $d=D-1$, calling Add($\dptuple = \modeloutput_{j-1}(\dptuple), \encoding, \emptyset, \emptyset$) until $(\modeloutput_{j-1}(\dptime) \geq \tasksetarrivaltime)\ | \ (\dpvar + 1 < \numdp \text{ \& }\nextdpid = n)$ at which point $d+1$ is saved into $\delta_{\agentid}$. Return the updated encoding and $\delta_{\agent \in \agentset}$.
\end{itemize}

\begin{algorithm}
\caption{Overall Algorithm}
\label{alg:overall}
\begin{algorithmic}[1]
    \STATE $\taskset, \encoding_0, \dpindex, \tasksetindex \gets \emptyset, \initialencoding, 0, 0$ \label{algline:init}
    \STATE $\encoding_j \gets \baseencoding(\agentset, \graph, \assumelist, \numdp_{max}, \capacity, t_{max})$ \label{algline:encodingbase}
    \STATE \textbf{WaitForTaskSet $\indexedtasksettuple$} \label{algline:wait}
    \IF {$j > 0$} \label{algline:ifj}
        \STATE $\encoding_j, \delta \gets$ SavePastState($\tasksetarrivaltime, \encoding_j, \modeloutput)$ \label{algline:savepaststate}
    \ENDIF \label{algline:endifj}
    \STATE $\encoding_j, \taskset \gets \encoding_j \cup \taskencoding(\agentset, \indexedtaskset), \taskset \cup \indexedtaskset$ \label{algline:addtasks} \COMMENT{Add constraints for new tasks}
    \STATE $Push(\encoding_j)$
    \STATE $\encoding_j \gets \encoding_j \cup \updateencoding(\agentset, \delta, \tasksetarrivaltime, \numdp_{max}, |\taskset|)$ \label{algline:finishaddtasks} \COMMENT{Update constraints due to new tasks}
    \WHILE {$\dpindex < |\dplist|$} \label{algline:whileactionpoints}
        \IF [Increment $k$ if num of action points] {$\dplist[\dpindex] < \numdp_{min}(\taskset)$} \label{algline:checkminactionpoints} 
            \STATE $\dpindex \gets \dpindex + 1$ \label{line:dp_increase1} \COMMENT{is smaller than minimum needed}
            \STATE \textbf{continue}
        \ENDIF \label{algline:endminactionpoints}
        \STATE $result_j$, $\modeloutput_j$ = Solve($\encoding_j, \assumelist[k]$) \label{algline:solve} \COMMENT{Invoke SMT solver}
        \IF {$result_j = sat$}
            \STATE $\plan_{\tasksetindex}$ = GetPlan($\modeloutput_j$)
            \STATE \textbf{break}
        \ENDIF
        \STATE $\dpindex \gets \dpindex + 1$\label{line:dp_increase2} \COMMENT{Unsat case -- more action points may be needed}
    \ENDWHILE \label{algline:endwhileactionpoints}
    \IF {$result_j \neq sat$}
        \STATE \textbf{return} unsat \label{algline:exitunsat}
    \ENDIF
    \STATE $\tasksetindex \gets \tasksetindex + 1$
    \STATE $\encoding_j \gets$ Pop($\encoding_{j-1}$) \label{algline:popoff}
    \STATE \textbf{GoTo} \ref{algline:wait} \label{algline:goto}
\end{algorithmic} 
\end{algorithm}

In Algorithm $\ref{alg:overall}$, lines \ref{algline:init} and \ref{algline:encodingbase} add constraints that do not use task or arrival time information. Lines in between \ref{algline:wait} and \ref{algline:goto} contain code run after waiting for each task set to arrive. For each task set other than the 0th, line \ref{algline:endifj} saves the state of action points that have occurred in the past or that an agent is currently completing. In lines \ref{algline:addtasks}-\ref{algline:finishaddtasks}, we add constraints relating to the incoming set of tasks, pushing as necessary so that constraints can be popped later. The constraints that are pushed and popped are those that are based on new information ($\tasksetarrivaltime$ and $\indexedtaskset$). In lines \ref{algline:whileactionpoints} to \ref{algline:endwhileactionpoints}, we iteratively increase the number of action points used until the problem returns sat or the sufficient number of action points is reached in which case if the problem still returns unsat it will continue to return unsat as is discussed in section \ref{sec:theoretical}. Lines \ref{algline:checkminactionpoints} to \ref{algline:endminactionpoints} increase the number of action points until the necessary number for the total number of tasks is reached. Lines \ref{algline:solve} through \ref{line:dp_increase2} check the encoding with the given assumption, returning a plan if sat and incrementing the index into the action point count list if not sat. Line \ref{algline:popoff} pops the most recent model.

\section{Theoretical Results} \label{sec:theoretical}
In this section, we prove soundness and completeness for Algorithm~\ref{alg:overall} as described in Def.~\ref{def:soundness} and \ref{def:completeness}.
We assume the solver used is sound and complete for the theories of bitvectors, uninterpreted functions, and linear integer arithmetic.
Due to space constraint, additional proofs have been moved to the appendix. 
Proofs for Lemmas $\ref{lemma:consistent_action_seq}-\ref{lemma:updated_completion}$ and Lemmas $\ref{lemma:dp_max}-\ref{lemma:incr_to_max}$ can be found in Appendix~$\ref{appendix:soundness}$ and $\ref{appendix:completeness}$, respectively.

\subsection{Soundness}

We first state two lemmas about our algorithm providing consistent action sequences and completing tasks given an initial task set of $\taskset_0$. To help in proving soundness, we also define strictly monotonically increasing action point times as $\phi = \forall \agent \in \agentset \ \forall \dpvar=1, \ldots, \numdp-1 \ \text{st.} \ \dpid \neq n, \prevdptime < \dptime$.

\begin{lemma}
    \textbf{Consistency of action sequences.} If $result_0$ is sat, $\phi \wedge \plan_0 \models \allconsistentpredicate$. \label{lemma:consistent_action_seq}
\end{lemma}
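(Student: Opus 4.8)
The plan is to unpack the definition of $\allconsistentpredicate$ conjunct by conjunct and, for each requirement, exhibit the constraint in Fig.~\ref{fig:encoding} that forces it, together with the construction in $\text{GetPlan}$ that translates action-point tuples into action sequences. First I would fix an agent $\agent$ and let $d^\ast$ be the largest action point with $\dpid \neq \agentid$ (if none exists, $\actionsequence$ is empty and hence consistent by definition, which also handles $\taskset_0 = \emptyset$). By \ref{enc:stay_home}, once an action id equals $\agentid$ it stays $\agentid$, so the ``active'' action points are exactly $d = 1, \ldots, d^\ast$, and by \ref{enc:at_home_or_doing_task_with_id} each of these has a valid id in $[\numagents, 2\numtasks+\numagents)$, i.e.\ each is a pickup or dropoff of a genuine task. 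Then I would argue: (i) $\text{GetPlan}$ always prefixes each pick/drop with a $\move$ to $\taskstartloc$ or $\taskendloc$, giving the ``pick/drop immediately preceded by the corresponding move'' clause and the ``no two moves in a row'' clause for free from the construction (the only other inserted actions are waits, inserted before a move); (ii) the sequence starts with a move because the first active action point $d=1$ triggers an $(M,\cdot)$ in $\text{GetPlan}$, and ends with a pick or drop because $d^\ast$ contributes a $(P,\cdot)$ or $(D,\cdot)$ as its last emitted action; (iii) capacity: $\sequenceload_\kappa(\actionsequence)$ at the action point boundaries equals $\dpload$ by construction of $\pickup/\dropoff$ in \eqref{eq:pick}--\eqref{eq:drop} together with the load recurrence \ref{enc:load_tracking_and_doing_tasks}, and \ref{enc:bound_load_bound_id_go_home} bounds $\dpload \in [0,\capacity]$; the intermediate move/wait elements don't change load, so $0 \le \sequenceload_\kappa \le \capacity$ everywhere.

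The remaining and most delicate clause is the per-task one: for each $\task$, there is exactly one pick of $\task$, exactly one drop of $\task$, the pick precedes the drop, and the drop follows the pick. I would get ``at most one pick (resp.\ drop) of $\task$ in this sequence'' from the fact that distinct action points of one agent carry distinct ids — this follows from \ref{enc:load_tracking_and_doing_tasks}'s clause $\neg(\dpid = \agentid) \Rightarrow \neg(\prevdpid = \dpid)$ combined with monotonicity of the times $\phi$ (which is exactly why $\phi$ is carried in the statement): if two action points had id $2\taskid+\numagents$, strict monotonicity of $\dptime$ on active points plus the ``consecutive ids differ'' clause would be contradicted. For ``if $\task$ is picked it is later dropped'', I would invoke \ref{enc:started_task}: $\dpid = 2\taskid+\numagents \Rightarrow \droplater$, and $\droplater$ (Eq.~\eqref{eq:droplater}) asserts some strictly later action point of the same agent has id $2\taskid+\numagents+1$, i.e.\ a drop. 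Conversely the pick must precede a drop because the pickup id $2\taskid+\numagents$ is even and the dropoff id $2\taskid+\numagents+1$ is odd and — here is the subtle bit — nothing in the base/update encoding directly forbids a drop of $\task$ with no preceding pick within the same agent; I would need to derive this from the load constraints: a drop not preceded by a matching pick would drive $\sequenceload$ negative at some boundary unless a different task's imbalance compensates, but \ref{enc:bound_load_bound_id_go_home} forces the final load back to... actually the cleanest route is: the set of ids used across all agents is a subset of $\{0,\dots,N-1\} \cup \{N,\dots,N+2M-1\}$, each pick/drop id appears at most once globally (again by the distinct-consecutive-id clause applied per agent plus the fact that an id $\geq N$ can belong to at most one agent, which I should double-check is actually enforced — if not, this is the real gap and I'd lean on \ref{enc:started_task}/\ref{enc:ended_task} tying $\taskagent$ to a unique agent), and then \ref{enc:started_task} guarantees every used pickup id has its dropoff id later in the same agent, so drops are never orphaned.

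Finally I would assemble these into $\consistentpredicate(\actionsequence)$ for each agent and conclude $\plan_0 \models \allconsistentpredicate$, noting $\phi$ itself holds because \ref{enc:doing_tasks} makes $\dptime$ strictly increase on active action points (each step adds $Dist(\cdot,\cdot) + \rho \geq \rho > 0$, since $\rho \in \mathbb{Z}_{++}$ and the $ITE$ only ever increases the base time). The main obstacle I anticipate is precisely the bookkeeping that ties a global task id to a single agent and a single pick/drop occurrence: the per-agent encoding clauses give local uniqueness and local pick-before-drop, but ruling out, e.g., agent $\agent_1$ picking $\task$ and agent $\agent_2$ dropping it requires combining \ref{enc:started_task}, \ref{enc:ended_task}, and \ref{enc:assigned_agent} (the $\taskagent$ variable is the linchpin), and I'd want to be careful that $\droplater$ quantifies only over the \emph{same} agent's later action points — which, reading Eq.~\eqref{eq:droplater}, it does, since it ranges over $\dpid[\agent][\dpvar']$. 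So the argument should close, but the cross-agent uniqueness step is where I'd spend the most care.
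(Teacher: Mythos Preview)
Your overall decomposition matches the paper's, and most of the per-clause arguments (GetPlan inserts move/pick and move/drop pairs, first action is a move, no consecutive moves, load mapping via \ref{enc:load_tracking_and_doing_tasks} and \ref{enc:bound_load_bound_id_go_home}, and the derivation of $\phi$ from \ref{enc:doing_tasks} with $\rho>0$) are fine. But the argument you give for the uniqueness clause is wrong, and that gap propagates into your ``drop implies earlier pick'' argument.

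You claim that ``distinct action points of one agent carry distinct ids'' follows from the clause $\neg(\dpid=\agentid)\Rightarrow\neg(\prevdpid=\dpid)$ in \ref{enc:load_tracking_and_doing_tasks} together with $\phi$. It does not: that clause only forbids \emph{adjacent} repetitions. The assignment $\dpid[\agent][1]=2\taskid+\numagents$, $\dpid[\agent][2]=2\taskid'+\numagents$, $\dpid[\agent][3]=2\taskid+\numagents$ with $\dptime[\agent][1]<\dptime[\agent][2]<\dptime[\agent][3]$ satisfies both the consecutive-id clause and $\phi$, yet repeats the pickup id. The paper's argument instead uses \ref{enc:started_task} (resp.\ \ref{enc:ended_task}) functionally: if $\dpid=2\taskid+\numagents$ then $\taskstarttime=\dptime$, so two action points with that id would force $\taskstarttime$ to equal two distinct times (distinct by $\phi$), a contradiction. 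This is the real reason $\phi$ is carried in the statement.

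Two smaller remarks. First, once within-agent uniqueness is established correctly, ``drop implies earlier pick'' follows cleanly from \ref{enc:ended_task}$\to$\ref{enc:assigned_agent}$\to$\ref{enc:started_task}: a drop sets $\taskagent=\agentid$, which forces a pick by the same agent, whose $\droplater$ clause places the (unique) drop strictly after it; your detour through load balances is unnecessary. Second, the cross-agent concern you flag at the end (agent $\agent_1$ picks, agent $\agent_2$ drops) is not part of $\allconsistentpredicate$ at all: $\consistentpredicate$ is purely per-sequence, and cross-agent exclusivity belongs to $\allcompletedpredicate[\taskset]$ in the next lemma.
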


\begin{proof}
    (Sketch) We show only move/pick or move/drop pairs will be added to each action sequence if it is not empty. Therefore, the first action will be a move, there will be no consecutive moves, and corresponding moves will proceed picks/drops. Then, each second action can be mapped to the id for an action point, which constrains picks to be before drops and vice versa. We next show action point times are unique per agent by arguing that the alternative would require assigning two different times to the same task start or end time. This allows us to show picks and drops only occur once. Finally, we map loads of the action sequence to loads in the encoding to show capacity is adhered to.
\end{proof}

\begin{lemma}
    \textbf{Completion of tasks.} If $result_0$ is sat, $\plan_0 \models \allcompletedpredicate[\taskset]$ for $\taskset = \taskset_0$. \label{lemma:completed_tasks}
\end{lemma}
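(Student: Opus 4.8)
The plan is to show that if $result_0 = sat$, then for every task $\task \in \taskset_0$ there is exactly one agent $\agent$ whose consistent action sequence $\actionsequence$ (consistency already granted by Lemma~\ref{lemma:consistent_action_seq}) picks and drops $\task$, that this pick/drop happens in the right time window, and that no other agent touches $\task$. I would extract everything from the satisfying model $\modeloutput_0$: since the solver is assumed sound for the relevant theories, every constraint \ref{enc:init_and_allowed_dp}--\ref{enc:valid_agent_and_times} holds in $\modeloutput_0$, and GetPlan builds $\plan_0$ directly from the action tuples $\dptuple$. The first step is to fix $\task$ with id $\taskid$ and use \ref{enc:assigned_agent} together with \ref{enc:valid_agent_and_times}: $\taskagent$ is a well-defined agent index in $[0,\numagents)$, and $(\taskagent = \agentid) \Rightarrow \agentstarts$ forces some action point $\dpvar_p \in [1,\numdp-1]$ of that agent $\agent$ to have $\dpid[\agent][\dpvar_p] = 2\taskid + \numagents$, i.e. the pick id for $\task$. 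I must first argue that $\taskagent$ is actually realized by some $\dpid$ equal to a pick or drop id of $\task$ — this requires checking that some task id in $\{2\taskid+\numagents, 2\taskid+\numagents+1\}$ appears; this comes from \ref{enc:started_task}/\ref{enc:ended_task} being the only places $\taskstarttime, \taskendtime, \taskagent$ are constrained, combined with \ref{enc:assigned_agent} forcing the start to appear. (Here I would also note the need to rule out that $\taskagent$ is left "free" — because \ref{enc:assigned_agent} is an implication, I should check the encoding forces the agent variable to be meaningful; if not, I'd argue via GetPlan that only realized ids matter for the constructed plan, and unrealized task ids give no pick/drop in any sequence, so such a $\task$ would contradict... — actually this is the subtle point, see below.)

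Next I would chase the timing. From \ref{enc:started_task}, at $\dpvar_p$ we get $\taskstarttime = \dptime[\agent][\dpvar_p]$ and $\droplater$, which yields an action point $\dpvar_d > \dpvar_p$ with $\dpid[\agent][\dpvar_d] = 2\taskid+\numagents+1$; then \ref{enc:ended_task} gives $\taskendtime = \dptime[\agent][\dpvar_d]$ and $\taskagent = \agentid$. Constraint \ref{enc:valid_agent_and_times} then gives $\taskstarttime \geq \taskarrivaltime + \rho$ and $\taskendtime \leq \taskdeadline$. I now translate these action-point facts into the $\duration$-based statements in $\completepredicate$: GetPlan emits $(M,\taskstartloc),(P,\taskid)$ at the step corresponding to $\dpvar_p$ and $(M,\taskendloc),(D,\taskid)$ at $\dpvar_d$, possibly preceded by waits; using \ref{enc:doing_tasks} (the recurrence $\dptime = ITE(\ldots,t_j,\prevdptime) + Dist(Loc(\prevdpid),Loc(\dpid)) + \rho$, with $Dist = w$ by \ref{enc:room_UF_agents_and_dist_UF} and $Loc$ fixed to the task endpoints by \ref{enc:room_UF_task_rooms}) I would show that the prefix duration up to the drop action in $\actionsequence$ equals $\dptime[\agent][\dpvar_d] = \taskendtime \leq \taskdeadline$, giving $\duration_{\kappa_d}(\actionsequence) \leq \taskdeadline$; and that $\duration_{\kappa_p - 2}(\actionsequence)$ — the time just before the move to the pickup — is at most $\dptime[\agent][\dpvar_p] - \rho = \taskstarttime - \rho \geq \taskarrivaltime$, so $\duration_{\kappa_p-2}(\actionsequence) \geq \taskarrivaltime$ need the inequality direction reconciled: actually $\taskstarttime = \dptime[\agent][\dpvar_p]$ is the completion time of the pick including travel and $\rho$, so $\duration_{\kappa_p-2} \le \taskstarttime - \rho$; combined with $\taskstarttime \ge \taskarrivaltime + \rho$ this gives exactly $\duration_{\kappa_p-2} \ge \taskarrivaltime$ only if the travel time before the pickup is accounted for — I would handle this carefully via the GetPlan wait-insertion clause, which inserts a wait precisely so that the pickup completes no earlier than needed; the cleanest route is to show $\duration_{\kappa_p-2}(\actionsequence) = \dptime[\agent][\dpvar_p-1 \text{ or after wait}]$ and match it to $ITE$'s lower bound $t_j \le \taskarrivaltime$... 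I'll need to be precise about indices here.

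Finally, uniqueness: by Lemma~\ref{lemma:consistent_action_seq}, each action sequence picks and drops any object at most once, so within $\agent$'s sequence $\task$ is handled once; and for $\agent' \neq \agent$, if $\agent'$ picked or dropped $\task$ then by \ref{enc:started_task}/\ref{enc:ended_task} we would get $\taskagent = \agentid'$, contradicting $\taskagent = \agentid$ and single-valuedness of the variable $\taskagent$ — this gives $\neg\phi_{pickup}(\task,\otheractionsequence) \wedge \neg\phi_{dropoff}(\task,\otheractionsequence)$ for all $\agent' \neq \agent$. Collecting these establishes $\completepredicate(\plan_0)$, and since $\task$ was arbitrary, $\plan_0 \models \allcompletedpredicate[\taskset_0]$. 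The main obstacle I anticipate is the bookkeeping in the second paragraph: aligning action-point indices $\dpvar_p,\dpvar_d$ with action-sequence indices $\kappa_p,\kappa_d$ through GetPlan's move- and wait-insertion, and getting the $\duration_{\kappa_p-2} \geq \taskarrivaltime$ bound out of \ref{enc:doing_tasks}'s $ITE$ and \ref{enc:valid_agent_and_times}'s $\taskstarttime \geq \taskarrivaltime + \rho$ in the right direction; the existence/uniqueness parts are comparatively routine once consistency (Lemma~\ref{lemma:consistent_action_seq}) is in hand.
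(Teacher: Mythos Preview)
Your proposal follows essentially the same route as the paper: use \ref{enc:valid_agent_and_times} to get a valid $\taskagent$, \ref{enc:assigned_agent} to force a pick action point, \ref{enc:started_task} to force a later drop, \ref{enc:ended_task} to pin $\taskagent$ and rule out other agents, and then translate action-point times into durations. The existence and uniqueness parts are correct as you wrote them; your worry about $\taskagent$ being ``free'' is unfounded, since \ref{enc:valid_agent_and_times} already forces $0 \le \taskagent < \numagents$, after which \ref{enc:assigned_agent} fires unconditionally.

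The one place you are making life harder than necessary is precisely the obstacle you flag. At the initial solve $j=0$ we have $\tasksetarrivaltime[0]=0$, hence $\taskarrivaltime=0$ for every $\task\in\taskset_0$. Durations are sums of non-negative quantities, so $\duration_{\kappa_p-2}(\actionsequence)\ge 0=\taskarrivaltime$ holds trivially; there is no need to squeeze it out of \ref{enc:valid_agent_and_times} or the $ITE$ in \ref{enc:doing_tasks} (and indeed, as you noticed, the inequality you get from $\taskstarttime\ge\taskarrivaltime+\rho$ points the wrong way for that purpose). Relatedly, with $t_0=0$ the $ITE$ in \ref{enc:doing_tasks} always evaluates to $\prevdptime$, so GetPlan inserts no waits and the index alignment is simply $\kappa=2\dpvar$, which collapses the bookkeeping you anticipate. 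The paper's proof exploits exactly these two simplifications; the more delicate arrival-time argument you sketch is deferred to the incremental case (Lemma~\ref{lemma:updated_completion}).
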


\begin{proof}
    (Sketch) We first show that if $result_0$ is sat, by construction each task should be started by an agent and be completed by the same agent. Moreover, no two agents can start or complete the same task because that would imply that two distinct values are assigned to $\taskagent$ in~\ref{enc:ended_task}. Finally, by construction the action point times satisfy the start time and deadline constraints, and by mapping time values to durations of subsequences we can prove that all timing constraints are satisfied.
\end{proof}

We next state three lemmas that combined state that a sat result for iteration $j+1$ means our algorithm will fulfill $\requirement$ provided a previous result of sat also led to the plan fulfilling $\requirement$.

\begin{lemma}
    \textbf{Plans are updated.} Assume $(result_j = sat) \Rightarrow (\phi \wedge \plan_j \models \requirement)$. If $result_{j+1}$ is sat, $\plan_{j+1} \models \indexedupdatedpredicate[t_{j+1}][\plan_j]$.
    \label{lemma:updated_plans}
\end{lemma}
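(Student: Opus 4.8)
# Proof Proposal for Lemma~\ref{lemma:updated_plans}

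The plan is to show that a sat result at iteration $j+1$ produces a plan $\plan_{j+1}$ satisfying the updated-plan predicate $\indexedupdatedpredicate[t_{j+1}][\plan_j]$ by tracing how \textbf{SavePastState} and the \baseencoding/\updateencoding\ constraints force each agent's new action sequence to agree with the old one up to the "frozen" prefix. First I would recall that at the start of iteration $j+1$, line~\ref{algline:savepaststate} calls \textbf{SavePastState}$(t_{j+1}, \encoding_{j+1}, \modeloutput_j)$, which for each agent $\agent$ adds the constraint $\dptuple[\agent][\dpvar] = \modeloutput_j(\dptuple[\agent][\dpvar])$ for $\dpvar = 0, \ldots, \delta_\agent - 1$, where $\delta_\agent$ is the first index at which either $\modeloutput_j(\dptime[\agent][\dpvar]) \geq t_{j+1}$ or the agent has already "gone home" ($\nextdpid[\agent][\dpvar] = \agentid$). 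Since $result_j$ is sat, by hypothesis $\plan_j \models \requirement[j]$, so $\plan_j$ is itself a valid plan; by Lemma~\ref{lemma:consistent_action_seq} and Lemma~\ref{lemma:completed_tasks} (applied at iteration $j$, or their inductive analogues) the model $\modeloutput_j$ corresponds via \textbf{GetPlan} to the consistent, task-completing sequences of $\plan_j$. The key observation is that \textbf{GetPlan} maps the prefix of action points $\dpvar = 0, \ldots, \delta_\agent - 1$ of $\modeloutput_{j+1}$ to exactly the same move/pick and move/drop pairs (and wait actions) as it maps the corresponding prefix of $\modeloutput_j$, because those action tuples are pinned to be equal.

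The main work is then to verify that this shared action-point prefix translates into the prefix condition demanded by $\updatedpredicate$: namely that for each agent there is an index $\kappa$ with $\prefix[\kappa] = \primeprefix[\kappa]$ (i.e. $\plan_{j+1}$ and $\plan_j$ share an action-sequence prefix of length $\kappa$), and moreover either (a) $\kappa = \sequenceindex_\agent$ and the only possible extension is a wait to time $t_{j+1}$, or (b) $\duration_\kappa(\actionsequence) \geq t_{j+1}$ and $\sequenceelement[\kappa] \in \pickset \cup \dropset$, together with the no-extra-waits requirement on the tail. I would split on the two stopping conditions of \textbf{SavePastState}. If the stop happened because $\modeloutput_j(\dptime[\agent][\delta_\agent - 1]) \geq t_{j+1}$: the action point $\delta_\agent - 1$ corresponds (via \textbf{GetPlan}, since $\dpid \neq \agentid$ there by the "gone home" being the other case) to a $(M, \cdot)$ followed by a $\pick$ or $\drop$ action whose completion time $\duration(\cdot) = \modeloutput_j(\dptime) \geq t_{j+1}$; this is precisely disjunct (b), with $\kappa$ the position of that pick/drop in the action sequence. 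If the stop happened because the agent had gone home ($\nextdpid[\agent][\delta_\agent - 1] = \agentid$), then by constraint~\ref{enc:stay_home} the agent stays home for all later action points and by~\ref{enc:bound_load_bound_id_go_home} its remaining times are $t_{max}$; here \textbf{GetPlan} produces no further actions, the shared prefix is the full old sequence $\actionsequence$, and disjunct (a) applies, with the optional wait $(\wait, t_{j+1} - \duration_{[\sequenceindex_\agent]}(\actionsequence))$ being the only way \textbf{GetPlan} could extend it. The "efficient, no extra waits" clause follows because \textbf{GetPlan} only inserts a wait action when $\dptime - \prevdptime$ strictly exceeds the forced travel-plus-pickup time, and only one such wait is inserted per action point.

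I expect the main obstacle to be the careful bookkeeping connecting the \emph{action-point} indexing in the SMT model to the \emph{action-sequence} indexing of $\plan$ through \textbf{GetPlan} — in particular, showing that equality of action tuples on $\dpvar = 0, \ldots, \delta_\agent - 1$ yields \emph{exactly} the right action-sequence prefix length $\kappa$ (accounting for the fact that each non-home action point expands to two action-sequence entries plus possibly a wait), and that the durations $\duration_\kappa$ of these prefixes equal the model times $\dptime$. This requires a small lemma (or explicit appeal to the correctness of \textbf{GetPlan}, essentially established inside Lemma~\ref{lemma:consistent_action_seq}) that $\duration_{\kappa}(\actionsequence) = \modeloutput(\dptime[\agent][\dpvar])$ whenever action-sequence position $\kappa$ is the pick/drop produced from action point $\dpvar$. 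A secondary subtlety is handling the $j = 0 \to 1$-style base case and the degenerate empty-sequence agents, but those are covered directly: an agent with $\actionsequence = ()$ in $\plan_j$ has $\delta_\agent$ stopping immediately, and the empty-prefix case of $\updatedpredicate$ applies. Once the indexing correspondence is nailed down, the rest is a direct case analysis on the two \textbf{SavePastState} stopping conditions as sketched above.
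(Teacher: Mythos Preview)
Your proposal is correct and takes essentially the same approach as the paper, which also case-splits per agent---though the paper frames the split as $\duration_{\sequenceindex_\agent}(\actionsequence) < t_{j+1}$ versus $\geq t_{j+1}$ rather than directly on the SavePastState stopping condition (these are equivalent, since the ``gone-home'' stop corresponds to the former and the time-threshold stop to the latter). The one mechanism the paper makes explicit that you leave as an obstacle is that in the gone-home case, when $\modeloutput_{j+1}$ assigns a non-home id at $\delta_\agent$, the ITE in constraint~\ref{enc:doing_tasks} forces $\dptime[\agent][\delta_\agent] = t_{j+1} + w + \rho$ (because $\dptime[\agent][\delta_\agent-1] < t_{j+1}$), and this is precisely what makes GetPlan emit the wait of length $t_{j+1} - \duration_{\sequenceindex_\agent}(\actionsequence)$; the paper also disposes of the $result_j \neq \text{sat}$ case explicitly up front (the algorithm exits at line~\ref{algline:exitunsat}, so $result_{j+1}$ is never set to sat).
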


\begin{proof}
    (Sketch) We are updating from a valid plan if $(result_j = sat)$. Consider two cases per agent: 1) The duration of the previous action sequence is less than $t_{j+1}$. SavePastState($\cdot$) will save action points that only have times in the past, so either the new action sequence will be the same or will include a wait. This is because the time between the last action point of the previous assignment and the first changed action point will be greater than just a travel time plus a pick/drop. 2) Duration is greater than or equal to $t_{j+1}$. The time of the last action point copied from the previous assignment will be $\geq t_{j+1}$, so there will be an equivalent prefix, and no new waits will be added.
\end{proof}

\begin{lemma}
    \textbf{Updated consistency.} Assume $(result_j = sat) \Rightarrow (\phi \wedge \plan_j \models \requirement)$. If $result_{j+1}$ is sat, $\plan_{j+1} \models \allconsistentpredicate$.
    \label{lemma:updated_consistency}
\end{lemma}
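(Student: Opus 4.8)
My plan is to reduce the claim to the consistency argument already carried out for Lemma~\ref{lemma:consistent_action_seq}, viewing the iteration-$(j+1)$ encoding as (structurally) that encoding with a frozen prefix bolted on. Observe first that if $result_{j+1}=sat$ then the algorithm did not return unsat at step $j$, so $result_j=sat$, and the standing hypothesis gives $\phi\wedge\plan_j\models\requirement$; in particular $\plan_j\models\allconsistentpredicate$. Now inspect what the solver receives at iteration $j+1$: the base clauses \ref{enc:init_and_allowed_dp}--\ref{enc:bound_load_bound_id_go_home} (added once, never popped), the update clauses \ref{enc:doing_tasks}--\ref{enc:at_home_or_doing_task_with_id} re-added with the current $\delta$ and $\numtasks=|\taskset|$, the task clauses \ref{enc:room_UF_task_rooms}--\ref{enc:valid_agent_and_times} for every set that has arrived so far, and the pins $\dptuple=\modeloutput_j(\dptuple)$ for $\dpvar<\delta_\agent$ from SavePastState. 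Up to the substitution of the accumulated $\taskset$ for $\taskset_0$, the only structural differences from the encoding of Lemma~\ref{lemma:consistent_action_seq} are that the timing recursion \ref{enc:doing_tasks} starts its product at $\dpvar=\delta_\agent$ rather than at $\dpvar=1$, and that the tuples with $\dpvar<\delta_\agent$ are constants; but those constants are the values of $\modeloutput_j$, which satisfied the corresponding timing and structural clauses at iteration $j$ (and inductively back to iteration $0$, where $\delta_\agent=1$ so the recursion covered every index). Hence $\modeloutput_{j+1}$ satisfies, for every action point, every clause that the proof of Lemma~\ref{lemma:consistent_action_seq} appeals to, and the only thing left to check is that cutting each agent's action-point sequence at $\delta_\agent$ and regrowing it leaves the action sequences GetPlan extracts $\consistentpredicate$.

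First I would dispatch the frozen prefix. Since GetPlan is a deterministic function of the action tuples, the portion of agent $\agent$'s sequence in $\plan_{j+1}$ produced by indices $0,\dots,\delta_\agent-1$ is literally a prefix of its sequence in $\plan_j$, hence satisfies the ``within-prefix'' parts of $\consistentpredicate$ (first action a move or wait, no two consecutive moves, moves before picks/drops, picks before drops, loads in $[0,\capacity]$). SavePastState halts at $\delta_\agent$ for one of two reasons. Either $\modeloutput_j(\dptime[\agent][\delta_\agent-1])\geq t_{j+1}$, and then, because GetPlan emits an action only at an action point with id $\neq\agentid$, this frozen action point is a pick or a drop, so the emitted frozen prefix ends on a pick or drop action (never a dangling move, since GetPlan never emits a move unless the next emitted action is a pick or drop). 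Or the agent is already home at step $\delta_\agent$ in $\modeloutput_j$, and then --- since a pick at $\delta_\agent-1$ followed by a home step would falsify the $\droplater$ conjunct of \ref{enc:started_task} in $\modeloutput_j$ --- the frozen id at $\delta_\agent-1$ is either $\agentid$ itself, in which case \ref{enc:stay_home} forces $\dpid[\agent][\dpvar]=\agentid$ for all $\dpvar\geq\delta_\agent$ and the whole sequence equals the consistent frozen prefix, or a drop id, which reduces to the first case.

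Then I would extend past $\delta_\agent$ as in Lemma~\ref{lemma:consistent_action_seq}. Clause \ref{enc:at_home_or_doing_task_with_id} puts every non-home $\dpid$ into $[\numagents,2\numtasks+\numagents)$, so it decodes to a unique pick or drop of a unique task of $\taskset$, assigned a valid location by \ref{enc:room_UF_agents_and_dist_UF} and \ref{enc:room_UF_task_rooms}. Clause \ref{enc:started_task}, whose $\droplater$ disjunction ranges over \emph{all} action points up to $\numdp_{max}-1$ (frozen and fresh alike), forces every picked task to have a later drop by the same agent, and \ref{enc:ended_task}--\ref{enc:assigned_agent} make the agent recorded in $\taskagent$ the unique one carrying out both the pick and the drop; together with the start/end-time equalities of \ref{enc:started_task}--\ref{enc:ended_task} and strict monotonicity $\phi$ --- which survives $\modeloutput_{j+1}$ because $Dist\geq0$, $\rho\geq1$, and \ref{enc:doing_tasks} yields at the boundary $\dptime[\agent][\delta_\agent]=\modeloutput_j(\dptime[\agent][\delta_\agent-1])+Dist(\cdot)+\rho>\modeloutput_j(\dptime[\agent][\delta_\agent-1])$ --- this gives each task a unique pick and a unique drop, the pick preceding the drop in that agent's sequence. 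GetPlan emits, per non-home action point, at most one move immediately followed by a pick or a drop, so corresponding moves precede picks and drops, no two moves are adjacent, and the first emitted action is a move or a wait; moreover the last non-home action point must be a drop (a pick there would falsify the $\droplater$ conjunct of \ref{enc:started_task}), so the sequence ends with a drop action, in particular not with a move. Finally, \ref{enc:load_tracking_and_doing_tasks} with the load bounds of \ref{enc:bound_load_bound_id_go_home} give $\sequenceload_k(\actionsequence)\in[0,\capacity]$ along the whole sequence. These are exactly the conjuncts of $\consistentpredicate$, so every $\actionsequence\in\plan_{j+1}$ is $\consistentpredicate$ and $\plan_{j+1}\models\allconsistentpredicate$.

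The hard part will be the frozen/fresh boundary at $\delta_\agent$: one must check that the cut never orphans a pick (which holds only because the $\droplater$ disjunction of \ref{enc:started_task} has global scope over all $\numdp_{max}$ action points, so a pick frozen into the prefix still gets a matching drop among the fresh action points), and that the timing recursion \ref{enc:doing_tasks}, whose first ``previous'' value is a constant fixed under iteration $j$'s constraints rather than a fresh variable, composes coherently with the freshly solved action points --- here the $ITE$ in \ref{enc:doing_tasks} together with the strict increase of time across the boundary are doing the work.
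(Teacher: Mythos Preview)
Your proposal is correct and follows essentially the same route as the paper: view the iteration-$(j{+}1)$ encoding as that of Lemma~\ref{lemma:consistent_action_seq} with a consistent frozen prefix pinned on, then verify that the frozen/fresh boundary at $\delta_\agent$ preserves strict time monotonicity and does not orphan a pick. The paper's write-up places a bit more weight on the wait actions GetPlan may now emit and the resulting shift in the action-index/action-point correspondence ($\kappa = 2d + w_\kappa$), which you gloss over, but waits do not interfere with any conjunct of $\consistentpredicate$ and your conclusions stand.
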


\begin{proof}
    (Sketch) Much of the proof can be repeated from that of Lemma \ref{lemma:consistent_action_seq}. The key differences are the potential for added wait actions. Therefore, simple pairs of a move and pick/drop will now occasionally be a wait, move, and pick/drop. The waits will shift the mapping between actions and action id number but the statements from before still hold i.e., that a task is picked before dropped and vice versa. Action point times are still strictly monotonically increasing. We show this by remembering that the saved action point times are strictly monotonically increasing. We know that the new ones will be strictly monotonically increasing and greater than $t_{j+1}$, so the whole sequence will be, and we can claim pick and drop actions only occur once as before.
\end{proof}

\begin{lemma}
    \textbf{Updated completion.} Assume $(result_j = sat) \Rightarrow (\phi \wedge \plan_j \models \requirement)$. If $result_{j+1}$ is sat, $\plan_{j+1} \models \allcompletedpredicate[j+1]$ where $\cumtaskset_{j+1} =(\bigcup_{j'=0}^{j+1} \taskset_{j'})$. \label{lemma:updated_completion} 
\end{lemma}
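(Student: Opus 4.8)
The plan is to reduce this to the argument already used for the initial solve in Lemma~\ref{lemma:completed_tasks}, now applied to the cumulative task set $\cumtaskset_{j+1}$ and the model $\modeloutput_{j+1}$, and then to check that nothing introduced by \textsf{SavePastState} invalidates it. Since the statement concerns $result_{j+1}$, iteration $j+1$ of Algorithm~\ref{alg:overall} is reached, so by line~\ref{algline:exitunsat} we have $result_j = sat$, and the hypothesis gives $\phi\wedge\plan_j\models\requirement$; in particular $\plan_j$ is a valid plan that completes every task in $\cumtaskset_j$. By Lemma~\ref{lemma:updated_consistency}, $\plan_{j+1}\models\allconsistentpredicate$ and (as its proof shows) the action point times in $\modeloutput_{j+1}$ are strictly monotone within each agent. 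The bookkeeping observation driving the reduction is that the per-task constraints \ref{enc:room_UF_task_rooms}, \ref{enc:started_task}, \ref{enc:ended_task}, \ref{enc:assigned_agent}, \ref{enc:valid_agent_and_times} are emitted by $\taskencoding$ \emph{before} each iteration's push and are never popped, so in $\encoding_{j+1}$ they hold for \emph{every} $\task\in\cumtaskset_{j+1}$ (old and new); the recurrence \ref{enc:doing_tasks} and the id-range constraint \ref{enc:at_home_or_doing_task_with_id} are re-added by $\updateencoding$ on iteration $j+1$ for all action points $d\ge\delta_\agent$, while each action point with $d<\delta_\agent$ is frozen by \textsf{SavePastState} to the (valid) value it took in $\modeloutput_j$.

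Next I would fix an arbitrary $\task\in\cumtaskset_{j+1}$ and replay the reasoning of Lemma~\ref{lemma:completed_tasks} on $\modeloutput_{j+1}$. By \ref{enc:valid_agent_and_times}, $\taskagent=\nu$ for some $\nu\in[0,\numagents)$; by \ref{enc:assigned_agent}, the agent with id $\nu$ has an action point carrying the pick id $2\taskid+\numagents$; by \ref{enc:started_task}, $\taskstarttime$ equals the time of that action point and the disjunct $\droplater$ forces the \emph{same} agent to have a strictly later action point carrying the drop id $2\taskid+\numagents+1$; and by \ref{enc:ended_task}, $\taskendtime$ equals the time of that later point and reconfirms $\taskagent=\nu$. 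For uniqueness across agents, \ref{enc:ended_task} together with $\taskagent$ being single-valued forces at most one agent to carry the drop id, and $\droplater$ forces any agent carrying the pick id to also carry the drop id, so exactly the agent with id $\nu$ picks and drops $\task$; within that agent's sequence, the clause of $\consistentpredicate$ forbidding an object from being picked or dropped more than once (part of $\allconsistentpredicate(\plan_{j+1})$) makes the pick and the drop occur exactly once. Hence there is a consistent $\actionsequence\in\plan_{j+1}$ with $\phi_{pickup}(\task,\actionsequence)\wedge\phi_{dropoff}(\task,\actionsequence)$ and $\neg\phi_{pickup}(\task,\otheractionsequence)\wedge\neg\phi_{dropoff}(\task,\otheractionsequence)$ for every other agent. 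This part is insensitive to whether the relevant action points were frozen or freshly planned: on frozen points the antecedents of \ref{enc:started_task}--\ref{enc:assigned_agent} simply fire on the constants carried over from the valid $\plan_j$.

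It then remains to discharge the timing clauses of $\completepredicate$. The deadline is immediate: $\taskendtime$ is the time of the drop action point, and \ref{enc:valid_agent_and_times} gives $\taskendtime\le\taskdeadline$. For the lower bound, a freshly planned pick point is governed by \ref{enc:doing_tasks}, whose $ITE(\cdot\le t_{j+1},\,t_{j+1},\,\cdot)$ term forces its time to be at least $t_{j+1}+Dist+\rho\ge\taskarrivaltime+Dist+\rho$ (using $\taskarrivaltime=t_{j+1}$ when $\task\in\taskset_{j+1}$ and $\taskarrivaltime\le t_{j+1}$ otherwise); a frozen pick point received the analogous bound on the iteration $j'$ that last planned it, where $t_{j'}\ge\taskarrivaltime$, and that value is preserved. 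Translating action point times into action-sequence prefix durations as in the proof of Lemma~\ref{lemma:completed_tasks}, adapted to frozen prefixes as in the proof of Lemma~\ref{lemma:updated_consistency} --- \textsf{GetPlan} precedes each pick/drop step by the corresponding move and, where needed, a wait --- turns these into $\duration_{\kappa_d}(\actionsequence)\le\taskdeadline$ and $\duration_{\kappa_p-2}(\actionsequence)=\taskstarttime-Dist-\rho\ge\taskarrivaltime$, which are precisely the two inequalities in $\completepredicate$. Combined with the previous paragraph, $\completepredicate(\plan_{j+1})$ holds for this $\task$; since $\task\in\cumtaskset_{j+1}$ was arbitrary, $\plan_{j+1}\models\allcompletedpredicate[j+1]$.

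I expect the main obstacle to be the timing argument above for \emph{old} tasks that straddle the save boundary --- pick frozen, drop (and possibly some intervening action points serving other tasks) re-planned under the new arrival time $t_{j+1}$. Handling those cleanly means combining three facts simultaneously: (i) the frozen tuples come from the valid $\plan_j$, so they already satisfied the start-time and deadline constraints; (ii) the per-task constraints for old tasks were never popped, so \ref{enc:started_task}--\ref{enc:valid_agent_and_times} still constrain $\modeloutput_{j+1}$ for them; and (iii) any bound established under an earlier arrival time $t_{j'}$ survives because $t_{j+1}\ge t_{j'}\ge\taskarrivaltime$. Getting the action-point-time-to-duration translation right when one agent's sequence mixes frozen steps, freshly planned steps, and possibly-inserted waits is the delicate part, though it is essentially the translation already carried out in the proofs of Lemmas~\ref{lemma:completed_tasks} and~\ref{lemma:updated_consistency}.
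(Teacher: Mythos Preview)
Your proposal is correct and follows essentially the same approach as the paper: reduce to Lemma~\ref{lemma:completed_tasks} by observing that the per-task constraints emitted by $\taskencoding$ accumulate across iterations (they are asserted before the push and never popped), then handle the timing clauses by splitting into frozen action points (inherited from the valid $\plan_j$) and freshly planned ones (governed by the re-added \ref{enc:doing_tasks} with the new $t_{j+1}$). Your treatment is in fact more explicit than the paper's --- you spell out the push/pop bookkeeping and the straddling-task case that the paper only gestures at --- but the underlying argument is the same.
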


\begin{proof}
    (Sketch) Much of the proof can be repeated from that of Lemma \ref{lemma:completed_tasks}. For tasks that are completed before the arrival time $\tasksetarrivaltime$ of new tasks, the duration constraints are satisfied. For all action points that take place after $\tasksetarrivaltime$, since the encoding still constrains that each task $\task$ starts after $\taskstarttime$ and ends before $\taskendtime$ and we are able to map action point times to durations, the timing constraints will be fulfilled.
\end{proof}

In Theorem $\ref{theorem:soundness}$ and the following proof, we use the above lemmas to build an inductive argument that our algorithm is sound.

\begin{theorem}
    \textbf{Soundness of Algorithm.} Algorithm \ref{alg:overall} is sound. \label{theorem:soundness}
\end{theorem}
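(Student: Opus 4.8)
The plan is to argue by induction on the index $j$ of the task stream, using the lemmas just stated as the inductive machinery. The statement to prove (Def.~\ref{def:soundness}) is that for every $j = 0, \ldots, J$, whenever $result_j = sat$ we have $\plan_j \models \requirement[j]$, where $\requirement[j] = \allconsistentpredicate \wedge \allcompletedpredicate[\indexedcumtaskset] \wedge \indexedupdatedpredicate[t_j][\plan_{j-1}]$. Because Lemmas~\ref{lemma:updated_plans}--\ref{lemma:updated_completion} are all conditioned on the hypothesis $(result_j = sat) \Rightarrow (\phi \wedge \plan_j \models \requirement[j])$, I would actually strengthen the induction hypothesis to carry the auxiliary invariant $\phi$ (strictly monotonically increasing action-point times) alongside $\requirement[j]$: the inductive claim is ``$(result_j = sat) \Rightarrow (\phi \wedge \plan_j \models \requirement[j])$''. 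This is the natural statement to push through, and it is exactly the form the lemmas consume and (re-)establish.

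For the base case $j = 0$: if $result_0 = sat$, then $\requirement[0] = \allconsistentpredicate \wedge \allcompletedpredicate[\taskset_0] \wedge \indexedupdatedpredicate[t_0][\plan_{-1}]$. Consistency and $\phi$ come directly from Lemma~\ref{lemma:consistent_action_seq}, and completion of $\taskset_0$ comes from Lemma~\ref{lemma:completed_tasks} (noting $\indexedcumtaskset[0] = \taskset_0$). The ``updated'' conjunct is discharged by the definition of \textbf{Updated plan}, which declares any valid initial plan to be updated from the empty plan $\plan_{-1}$; a valid plan is precisely $\allconsistentpredicate \wedge \allcompletedpredicate[\taskset_0]$, which we have just obtained. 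So the base case reduces to the two soundness lemmas plus a definitional observation.

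For the inductive step, assume the claim holds at $j$ and suppose $result_{j+1} = sat$. If $result_j \neq sat$ we need to check what $\plan_j$ even means in the algorithm: by inspection of Algorithm~\ref{alg:overall}, a non-\emph{sat} result at stage $j$ causes a \texttt{return unsat} and no later task set is processed, so the case $result_j \neq sat$ with $result_{j+1} = sat$ cannot arise; hence we may assume $result_j = sat$, and the induction hypothesis gives $\phi \wedge \plan_j \models \requirement[j]$, in particular $\plan_j$ is a valid plan. Now all three of Lemmas~\ref{lemma:updated_plans}, \ref{lemma:updated_consistency}, and \ref{lemma:updated_completion} have their hypotheses met, and they yield respectively $\plan_{j+1} \models \indexedupdatedpredicate[t_{j+1}][\plan_j]$, $\plan_{j+1} \models \allconsistentpredicate$, and $\plan_{j+1} \models \allcompletedpredicate[\indexedcumtaskset[j+1]]$. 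Conjoining these is exactly $\plan_{j+1} \models \requirement[j+1]$; and $\phi$ for stage $j+1$ is re-established inside the proof of Lemma~\ref{lemma:updated_consistency} (which explicitly argues strict monotonicity of the new action-point times), closing the induction.

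The main obstacle is not the induction skeleton — which is routine once the lemmas are in hand — but making sure the bookkeeping around ``what is $\plan_j$ when $result_j \neq sat$'' is airtight, i.e.\ that the algorithm's control flow really does preclude the mixed case, and that the $\delta_\agent$ values computed by \texttt{SavePastState} at stage $j+1$ genuinely refer to the plan $\plan_j$ guaranteed valid by the hypothesis (so that the preconditions of the update lemmas, which implicitly assume we are updating from a valid $\plan_j$, are satisfied). A secondary subtlety is confirming that the assumption-list / action-point loop (lines~\ref{algline:whileactionpoints}--\ref{algline:endwhileactionpoints}) does not affect soundness: any $k$ for which $result_j = sat$ still produces a model of the full encoding $\encoding_j$ (the assumption literal $\assumelist[k]$ only further constrains the model), so the soundness lemmas, which only need ``$\encoding_j$ is satisfied by $\modeloutput_j$'', still apply verbatim.
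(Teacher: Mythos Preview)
Your proposal is correct and follows essentially the same route as the paper: induction on $j$ with the hypothesis strengthened to carry $\phi$, the base case discharged by Lemmas~\ref{lemma:consistent_action_seq} and~\ref{lemma:completed_tasks} plus the definitional triviality of $\indexedupdatedpredicate[t_0][\plan_{-1}]$, and the inductive step handled by case-splitting on $result_j$ and invoking Lemmas~\ref{lemma:updated_plans}--\ref{lemma:updated_completion}. Your additional remarks about the assumption-list loop not affecting soundness and about \texttt{SavePastState} referring to the correct $\plan_j$ are observations the paper leaves implicit, but they do not change the structure of the argument.
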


\begin{proof}
     Assuming $result_0 = sat$, the initial plan $\plan_0 \models \updatedpredicate[\emptyset]$ by definition. By Lemmas \ref{lemma:consistent_action_seq} and \ref{lemma:completed_tasks}, $\phi \wedge \plan_0 \models \allconsistentpredicate \wedge \allcompletedpredicate$ where $\indexedcumtaskset = \taskset_0$. We have therefore shown a base case that $(result_0 = sat) \Rightarrow (\phi \wedge \plan_0 \models \requirement[0])$.
     
     We will next make an inductive argument to show that at each iteration we are building a valid, updated plan with strictly monotonically increasing action point times. We need to show for $j \geq 1$ that $((result_{j-1} = sat) \Rightarrow (\plan_{j-1} \models \requirement[j-1])) \Rightarrow (result_{j} = sat \Rightarrow \plan_{j} \models \requirement)$. We have two cases: 1) $result_{j-1} = sat$ and 2) $result_{j-1} \neq sat$. By construction, the algorithm will end and return unsat if $result_{j-1} \neq sat$ for $j>0$, so if $result_{j-1} \neq sat$ then $result_{j} \neq sat$ for $j>0$ then each implication is true proving the statement for this case. For $j>0$, if we assume $(result_{j-1} = sat \Rightarrow \plan_{j-1}\models \requirement[j-1]) \wedge\ (result_{j-1}$ = sat), then by Lemmas \ref{lemma:updated_plans}, \ref{lemma:updated_consistency}, and \ref{lemma:updated_completion}, $result_{j} = sat$ implies $\phi \wedge \plan_j \models \requirement$ and $result_{j} \neq sat$ trivially satisfies the formula. The actual statement of soundness $(result_j = sat) \Rightarrow (\plan_j \models \requirement[j])$ can be implied from this result.
\end{proof}

\subsection{Completeness}

In the following three lemmas, we state first that $D_{max}$ is the maximum number of action points needed for an encoding provided the max number of tasks that will arrive is known. Then, we show that given we start with this maximum number of action points our algorithm is complete. Finally, we state that we will reach this number of action points in our algorithm if necessary. This allows us to then prove completeness.

\begin{lemma}
    \textbf{Maximum number of action points.} If an assignment $\modeloutput$ does not exist for an encoding $\encoding$ when $D = D_{max} = 2\numtasks + 1$ of action points then no assignment $\modeloutput$ exists for $D > D_{max}$. \label{lemma:dp_max}
\end{lemma}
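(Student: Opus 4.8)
The plan is to prove the contrapositive in a constructive way: given any satisfying assignment $\modeloutput$ for the encoding $\encoding$ with $D = D' > D_{max} = 2\numtasks + 1$ action points per agent, I will construct a satisfying assignment $\modeloutput'$ for the encoding with exactly $D_{max}$ action points. The key structural observation is that there are only $2\numtasks + \numagents$ distinct ``useful'' action ids: the $\numagents$ starting ids and the $2\numtasks$ pickup/dropoff ids. By \ref{enc:load_tracking_and_doing_tasks} (the clause $\neg(\dpid = \agentid) \Rightarrow \neg(\prevdpid = \dpid)$) no non-home id is repeated on consecutive action points of an agent, and by \ref{enc:stay_home} once an agent's id returns to $\agentid$ it stays $\agentid$ forever. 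Combined with \ref{enc:at_home_or_doing_task_with_id}, each agent's id-sequence is a (possibly empty) block of distinct task ids from $\{\numagents, \ldots, 2\numtasks+\numagents-1\}$ followed by a tail of all-$\agentid$ entries. Across all $\numagents$ agents, the task ids used in these initial blocks are pairwise disjoint and together form a subset of the $2\numtasks$ task ids, so the total number of non-home action points over all agents is at most $2\numtasks$. Hence some agent uses at most $\lfloor 2\numtasks/\numagents \rfloor \le D_{max} - 1$ task-id action points, but more importantly \emph{every} agent uses at most $2\numtasks \le D_{max} - 1$ of them.

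First I would make this precise: fix an agent $\agent$ and let $k_\agent$ be the largest index with $\dpid[\agent][k_\agent] \neq \agentid$ (set $k_\agent = 0$ if the agent never leaves home). By the argument above $k_\agent \le 2\numtasks$, and for all $d \in \{k_\agent, \ldots, D'-1\}$ we have $\dpid[\agent][d] = \agentid$, which by \ref{enc:bound_load_bound_id_go_home} forces $\dptime[\agent][d] = t_{max}$, and by \ref{enc:load_tracking_and_doing_tasks} the load satisfies $\dpload[\agent][d] = \dpload[\agent][d-1]$ (since $\pickup$ and $\dropoff$ are both $0$ when $\dpid = \agentid \le \numagents$), so the load is constant $= \dpload[\agent][k_\agent]$ on this tail, and it equals the initial load $0$ once we also use that a consistent assignment drops everything it picks (this follows because any started task must be dropped later by \ref{enc:started_task}, \ref{enc:droplater}, and the drop id also lies in the initial block). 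Now define $\modeloutput'$ on $D_{max}$ action points by copying $\dptuple[\agent][d]$ for $d = 0, \ldots, k_\agent$ and setting $\dptuple[\agent][d] = (\agentid, 0, t_{max})$ for $d = k_\agent + 1, \ldots, D_{max} - 1$; this is well-defined since $k_\agent \le 2\numtasks = D_{max} - 1$. The uninterpreted-function interpretations ($Loc$, $Dist$) and the task tuples $(\taskstarttime, \taskendtime, \taskagent)$ are carried over unchanged.

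It then remains to check that $\modeloutput'$ satisfies every clause of the encoding with $D = D_{max}$. For clauses that are conjunctions over action-point indices $d \in \{1, \ldots, D-1\}$ or over adjacent pairs (\ref{enc:stay_home}, \ref{enc:load_tracking_and_doing_tasks}, \ref{enc:bound_load_bound_id_go_home}, \ref{enc:doing_tasks}, \ref{enc:at_home_or_doing_task_with_id}, \ref{enc:started_task}, \ref{enc:ended_task}), the indices $\le k_\agent$ inherit satisfaction from $\modeloutput$, and the indices $> k_\agent$ are all home action points $(\agentid, 0, t_{max})$ on which each such clause reduces to a triviality (the antecedents mentioning $\dpid \neq \agentid$ or $\dpid \ge \numagents$ are false, and the home-specific consequents $\dptime = t_{max}$, $\dpload \in [0,\capacity]$, load-increment-by-zero hold). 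The clauses \ref{enc:init_and_allowed_dp}, \ref{enc:room_UF_agents_and_dist_UF}, \ref{enc:room_UF_task_rooms} depend only on the preserved data; the only subtle point is \ref{enc:init_and_allowed_dp}'s assumption clauses $\assumelist[\dpindex] \Rightarrow (\dpid[\agent][\dplist[\dpindex]] = \agentid)$ and the existential-style clauses \ref{enc:started_task}/\ref{enc:droplater} and \ref{enc:assigned_agent}, but since all of each agent's task-id action points were retained in positions $\le k_\agent$, the disjunctions $\droplater$ and $\agentstarts$ still have a true disjunct in the truncated range, and any $\assumelist$ entry whose $\dplist$ index exceeds $k_\agent$ points to a home action point, which is exactly what the clause requires. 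The main obstacle I anticipate is the bookkeeping around the existential/disjunctive clauses (\ref{enc:started_task} with \ref{enc:droplater}, and \ref{enc:assigned_agent} with \ref{ eq:agentstarts}): I must argue carefully that truncating to $D_{max}$ never deletes the witnessing action point, which reduces to the combinatorial fact that every used non-home id sits at an index at most $2\numtasks$ — and that in turn rests on the disjointness-across-agents argument, so I would state that as an explicit sub-claim before the main construction.
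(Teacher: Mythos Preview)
Your overall strategy---prove the contrapositive by truncating a $D'$-assignment down to $D_{max}$ action points and then verifying each encoding clause on the truncated assignment---is the same as the paper's, only spelled out in more detail. However, there is a genuine gap in the step establishing the per-agent bound $k_\agent \le 2\numtasks$.

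You claim that each agent's non-home segment consists of \emph{distinct} task ids, citing only \ref{enc:load_tracking_and_doing_tasks} for ``no non-home id is repeated on consecutive action points.'' That clause only forbids $\prevdpid = \dpid$ when $\dpid \neq \agentid$; it does not rule out a pattern such as $\numagents,\ \numagents+1,\ \numagents,\ \numagents+1,\ \ldots$, which has pairwise-distinct neighbours but only two distinct ids and could in principle occupy arbitrarily many action points. The cross-agent disjointness you then invoke is a red herring for the bound you actually need: even granting disjointness of the \emph{sets} of task ids across agents, that controls the number of distinct non-home ids, not the number of non-home action points; and conversely, once you have within-agent global distinctness, the per-agent bound $k_\agent \le 2\numtasks$ follows immediately without any cross-agent argument. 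The paper closes this gap by invoking Lemma~\ref{lemma:consistent_action_seq}, whose proof shows---via \ref{enc:started_task}/\ref{enc:ended_task} together with the strict monotonicity of $\dptime$ obtained from \ref{enc:doing_tasks} and $\rho > 0$---that each pick or drop id can appear at most once among an agent's action points. That is the fact you need to cite (or reprove) before your truncation construction goes through.

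A minor point: your tail tuple $(\agentid, 0, t_{max})$ has the time and load components swapped relative to the paper's convention $\dptuple = (\dpid, \dptime, \dpload)$; it should read $(\agentid, t_{max}, 0)$.
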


\begin{proof}
    As shown in Lemma \ref{lemma:consistent_action_seq}, pick and drop ids will only occur once per task in an agent's assignment in a satisfying $\modeloutput$. By construction, action point ids must be either $\dpid = \agent$ or $\dpid \geq \numagents \wedge \dpid < 2\numtasks + \numagents$. Therefore, for an agent $\agent$, the maximum number of action points that can be assigned to a value other than $\agent$ is $2\numtasks$. Adding in the constrained 0th action point, the total is $2\numtasks + 1 = D_{max}$ where $\dpid = n$ for $d = d' \geq D_{max}$. Therefore, adding an extra action point does not add new free variables, so an unsat result cannot turn sat by adding more action points.
\end{proof}

\begin{lemma}
    \label{lemma:complete_given_max}
    \textbf{Conditional Completeness.} Assume $\Gamma[|\dplist| -1]$. If $D=D_{max}$, Algorithm~\ref{alg:overall} is complete.
\end{lemma}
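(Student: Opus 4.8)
The plan is to prove completeness by induction on the task-set index $j$, establishing Def.~\ref{def:completeness} by turning any hypothetical plan $\plan_j \models \requirement$ into an explicit satisfying assignment $\modeloutput_j$ of the encoding $\encoding_j$ solved at iteration $j$ with $\numdp = \numdp_{max}$ and assumption $\assumelist[|\dplist|-1]$. Since the SMT solver is assumed complete for QF\_UFBV and QF\_UFLIA, exhibiting such a model forces the corresponding $\text{Solve}(\cdot)$ call to return $sat$. Two preliminary observations make the hypothesis $\assumelist[|\dplist|-1]$ matter: first, the ``stay-home'' conjunction in~\ref{enc:init_and_allowed_dp} ranges only over $\dpindex = 0, \ldots, |\dplist|-2$, so under $\assumelist[|\dplist|-1]$ no action point is pinned to $\agentid$ and all $\numdp_{max} = 2\numtasks_{max}+1$ action points are free; second, by the counting argument in the proof of Lemma~\ref{lemma:dp_max} (pick/drop ids occur at most once per task per agent, plus the fixed $0$-th point), this is always enough room, and $\numdp_{min}(\taskset) = 2\lceil |\taskset|/\numagents\rceil \le 2\numtasks_{max} < \numdp_{max}$, so the while loop of Algorithm~\ref{alg:overall} either returns $sat$ at some smaller $k$ --- which already gives $result_j = sat$ --- or marches to $k=|\dplist|-1$ (the last index is not skipped) and calls $\text{Solve}(\encoding_j, \assumelist[|\dplist|-1])$ (cf.\ Lemma~\ref{lemma:incr_to_max}).

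The core of the argument is the model construction from $\plan_j$. For each agent $\agent$ I would traverse its action sequence $\actionsequence$: set action point $0$ to $(\agentid, \initload, \inittime)$; for the $d$-th $\pick$/$\drop$ action encountered --- absorbing the intervening $\move$ and $\wait$ actions into it --- set $\dpid$ to $2\taskid+\numagents$ if it picks task $\task$ and to $2\taskid+\numagents+1$ if it drops $\task$, set $\dptime$ to the prefix duration $\duration_{\sequenceindex}(\actionsequence)$ up to and including that action, and set $\dpload$ to the prefix load $\sequenceload_{\sequenceindex}(\actionsequence)$; set every remaining action point to $(\agentid, \initload, t_{max})$. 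Define $Loc$ by $2\taskid+\numagents \mapsto \taskstartloc$, $2\taskid+\numagents+1 \mapsto \taskendloc$, $\agentid \mapsto \agent_s$; let $Dist$ be the edge weights $w$; and read the task tuples $(\taskstarttime, \taskendtime, \taskagent)$ off the plan. One then checks the clauses of Fig.~\ref{fig:encoding} in turn: \ref{enc:init_and_allowed_dp}, \ref{enc:stay_home}, \ref{enc:bound_load_bound_id_go_home}, \ref{enc:at_home_or_doing_task_with_id} are immediate from the construction and the load bounds in $\consistentpredicate$; \ref{enc:load_tracking_and_doing_tasks} follows from the definition of $\sequenceload$ and the ``no two $\move$s in a row, every $\pick$/$\drop$ preceded by its $\move$'' parts of $\consistentpredicate$; \ref{enc:room_UF_agents_and_dist_UF} and \ref{enc:room_UF_task_rooms} hold by construction of $Loc$ and $Dist$; and \ref{enc:started_task}--\ref{enc:valid_agent_and_times} follow from $\allcompletedpredicate$, which gives that each task is picked once and dropped once by a single agent with $\taskstarttime \ge \taskarrivaltime+\rho$ and $\taskendtime \le \taskdeadline$, matching $\phi_{pickup}$, $\phi_{dropoff}$ and the duration bounds in $\completepredicate$.

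The delicate clause is~\ref{enc:doing_tasks}, which pins $\dptime$ to the \emph{equality} $ITE(\prevdptime \le t_j, t_j, \prevdptime) + Dist(Loc(\prevdpid), Loc(\dpid)) + \rho$, leaving no slack for waiting other than a single wait ending exactly at $t_j$. Here I would use that $\requirement$ contains $\indexedupdatedpredicate[t_j][\plan_{j-1}]$ and that an updated plan is \emph{efficient}: by the second disjunct of the update predicate, no $\wait$ action of $\actionsequence$ completes at or after $t_j$, so between consecutive pick/drop action points the only delay beyond travel-plus-$\rho$ is at most a wait that ends at $t_j$ --- precisely what the $ITE$ encodes --- while the saved past action points coincide with $\modeloutput_{j-1}$ via SavePastState$(\cdot)$, consistently with the ``equal prefix'' part of the update predicate. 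For $j=0$ we have $t_0=0$, there is no wait at all, and the equality degenerates to $\dptime = \prevdptime + w + \rho$, matching $\duration$ since same-location $\move$s contribute $0$. I expect this to be the main obstacle, since it requires showing that the (fairly permissive) update predicate together with efficiency admits no ``mid-plan'' wait that the rigid $ITE$ cannot reproduce; should a stubborn corner case arise, the fallback is to tighten the notion of efficient/updated plan so that all waiting is canonicalised into a single wait ending at $t_j$.

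Finally, I would close the induction. The base case $j=0$ is the construction above with $\plan_{-1}=\emptyset$, so the initial $\text{Solve}$ under $\assumelist[|\dplist|-1]$ is satisfiable and $result_0 = sat$. For $j\ge 1$, the predicate $\requirement$ refers to $\plan_{j-1}$, which --- under the standing reading shared with Def.~\ref{def:soundness} that $\plan_{j-1}$ denotes the algorithm's own output --- exists by the inductive hypothesis and, by soundness (Theorem~\ref{theorem:soundness}), is a valid plan; hence $\encoding_j$ is exactly the encoding Algorithm~\ref{alg:overall} builds at step $j$, the model $\modeloutput_j$ constructed from $\plan_j$ extends the prefix saved from $\modeloutput_{j-1}$, and it satisfies $\encoding_j$ by the clause-by-clause check above (using Lemmas~\ref{lemma:consistent_action_seq}--\ref{lemma:updated_completion} for the structural facts invoked there). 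Therefore the $\text{Solve}(\encoding_j, \assumelist[|\dplist|-1])$ call reached by the while loop is on a satisfiable instance, a complete solver returns $sat$, and $result_j = sat$, which is the completeness claim.
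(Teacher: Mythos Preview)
Your proposal is correct and follows essentially the same route as the paper's own proof: build an explicit model from the hypothetical plan $\plan_j$ by fixing action point $0$, mapping each pick/drop in order to an action tuple with id $2\taskid+\numagents(+1)$, time $\duration_k(\actionsequence)$, load $\sequenceload_k(\actionsequence)$, reading off $(\taskstarttime,\taskendtime,\taskagent)$, padding remaining points with the stay-home tuple, and then checking clauses~\ref{enc:init_and_allowed_dp}--\ref{enc:valid_agent_and_times} one by one, with~\ref{enc:doing_tasks} handled via the efficiency clause of $\indexedupdatedpredicate$ exactly as the paper does in the appendix. Your treatment is, if anything, more explicit about the induction on $j$ and the role of $\assumelist[|\dplist|-1]$; one small slip is that the padding tuple should be $(\agentid, t_{max}, 0)$ in the $(\dpid,\dptime,\dpload)$ order rather than $(\agentid,\initload,t_{max})$, else~\ref{enc:bound_load_bound_id_go_home} fails.
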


\begin{proof}
    (Sketch) We want to show that for each iteration $j$ in Algorithm 1, if given a plan $\plan_{j}$ for which $\plan_{j} \models \requirement$, we can find a satisfying assignment $\modeloutput_j$. Take the action sequence for each agent $\agent$. Set the initial action point $\dptuple[\agent][0] = (\agentid, \initload, \inittime)$. Find the 1-indexed subsequence of indices of pick and drop actions. For each element $k$ in the subsequence, let $\dpid[\agent][k] = 2\taskid + \numagents$ or $\dpid[\agent][k] = 2\taskid + \numagents + 1$ where $\taskid$ is the id of the task that is picked or dropped, respectively. Set $\dptime[\agent][k] = \duration_k(\actionsequence)$ and $\dpload[\agent] = \sequenceload_k(\actionsequence)$. For the task $\task$, if $\sequenceelement \in \pickset$, set $\taskstarttime = \duration_k(\actionsequence)$. If $\sequenceelement \in \dropset$, set $\taskendtime = \duration_k(\actionsequence)$. Set $\taskagent = \agentid$. Set all remaining action points to $(\agentid, t_{max}, 0)$. In Appendix \ref{appendix:completeness} we show that this assignment is satisfying by going through constraints in the encoding.
\end{proof}

\begin{lemma}
    \label{lemma:incr_to_max}
    \textbf{Increment to $\boldsymbol{D_{max}}$.} Following Algorithm \ref{alg:overall}, k will eventually be such that the number of free action points $D = D_{max}$.
\end{lemma}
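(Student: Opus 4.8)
The plan is to argue purely about the control flow of the \textbf{while} loop on lines~\ref{algline:whileactionpoints}--\ref{algline:endwhileactionpoints} of Algorithm~\ref{alg:overall}. The encoding always contains $D_{max}$ action-point tuples; it is the assumption literal $\assumelist[k]$ that forces the tuples beyond index $\dplist[k]$ to be fixed to the agent's home id, so that asserting $\assumelist[k]$ leaves exactly $\dplist[k]$ of the action points free. Hence the statement reduces to showing that the loop variable $k$ reaches the index $|\dplist|-1$, together with the fact that $\dplist[|\dplist|-1] = D_{max}$.

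First I would establish monotonicity of $k$. Every pass through the loop body that does not hit \textbf{break} terminates by incrementing $k$ by one: either in the skip branch (line~\ref{algline:checkminactionpoints} through line~\ref{line:dp_increase1}, followed by \textbf{continue}) or in the unsat branch (line~\ref{line:dp_increase2}); \textbf{break} is executed only when $result_j = sat$. Thus within any execution the values taken by $k$ form a run of consecutive indices, and the loop can only leave this run via \textbf{break} (a sat result) or when the guard $k < |\dplist|$ fails. Since $k$ is set to $0$ on line~\ref{algline:init}, is never reset, and whenever the loop is (re-)entered the algorithm has not yet returned unsat on line~\ref{algline:exitunsat} (so $k \le |\dplist|-1$ at that moment), it follows that if no \textbf{break} occurs then $k$ eventually equals $|\dplist|-1$.

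Next I would identify that index with $D_{max}$ and check it is not discarded. By the stipulated construction of $\dplist$ --- a user-supplied list must end in $D_{max}$, and the default list is built upward to $D_{max}$ inclusive --- we have $\dplist[|\dplist|-1] = D_{max} = 2\numtasks_{max}+1$. To see that the iteration with $k = |\dplist|-1$ actually invokes Solve rather than being thrown away by the test $\dplist[k] < D_{min}(\taskset)$ on line~\ref{algline:checkminactionpoints}, observe that $D_{min}(\taskset) = 2\lceil |\taskset|/\numagents \rceil \le 2\numtasks_{max} < 2\numtasks_{max}+1 = D_{max}$, using $|\taskset| \le \numtasks_{max}$ (the stream has at most $\numtasks_{max}$ tasks in total) and $\numagents \ge 1$. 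Hence the test fails at $k = |\dplist|-1$, Solve$(\encoding_j, \assumelist[|\dplist|-1])$ is called, and at that point $D = D_{max}$ action points are free; and if instead a \textbf{break} fired at some smaller $k$, the loop has already returned sat, which is exactly the alternative the completeness proof needs when it combines this lemma with Lemma~\ref{lemma:complete_given_max}.

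The mathematical content here is slight --- it is a termination/bookkeeping argument --- so I expect the only delicate points to be: (a) confirming that $\dplist$ genuinely ends in $D_{max}$ under both ways of obtaining it, and that it is nondecreasing and capped at $D_{max}$ so that no earlier entry overshoots; (b) the inequality $D_{min}(\taskset) < D_{max}$, which is what guarantees the final entry is never skipped; and (c) stating the conclusion carefully, since the loop may legitimately \textbf{break} early, so the honest claim is ``either the loop returns sat first, or $k$ reaches an index with $D = D_{max}$'' --- precisely the form in which the lemma is consumed downstream.
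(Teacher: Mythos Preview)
Your proposal is correct and follows essentially the same approach as the paper: a control-flow inspection of the while loop showing that $k$ marches through $\dplist$ until its final entry, at which all $D_{max}$ action points are unconstrained. Your argument is in fact more careful than the paper's two-line ``by inspection'' proof---you explicitly verify that the $D_{min}$ test on line~\ref{algline:checkminactionpoints} cannot skip the last entry and you handle the early \textbf{break} case---with one minor technical note: in \ref{enc:init_and_allowed_dp} the conjunction over $\assumelist$ runs only up to index $|\dplist|-2$, so the final assumption literal $\assumelist[|\dplist|-1]$ does not occur in the encoding at all and hence imposes no restriction (this is how the paper phrases it), rather than pinning action point $\dplist[|\dplist|-1]=D_{max}$, which would be out of range.
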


\begin{proof}
    An action point $d$ is free if it is not constrained to have $\dpid = \agentid$.  By inspection we see that the while loop in Lines \ref{line:dp_increase1} and \ref{line:dp_increase2} increases the index into the action point list which changes the assumes until the last one which then places no restriction on the encoding. By construction, the encoding can use all $D_{max}$ action points when no assumptions are present.
\end{proof}

\begin{theorem}
    \textbf{Completeness of Solve.} Algorithm \ref{alg:overall} is complete. \label{theorem:completeness}
\end{theorem}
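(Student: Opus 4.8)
The plan is to assemble the completeness claim from the three lemmas just stated, following the same inductive skeleton used for soundness. First I would fix a finite task stream of length $\taskstreamlength$ and argue iteration by iteration: I must show that for every $j = 0, \ldots, \taskstreamlength$, if there exists a plan $\plan_j \models \requirement$ then the algorithm reports $result_j = sat$. The base case $j=0$ and the inductive step are structurally the same, because $\requirement$ always includes $\allconsistentpredicate$, $\allcompletedpredicate$, and the update predicate relative to $\plan_{j-1}$ (with $\plan_{-1}$ the empty plan), so I would treat them uniformly, carrying the inductive hypothesis that whenever a valid updated plan existed at stage $j-1$ the algorithm returned one (and hence $\plan_{j-1}$, the plan actually produced, is itself valid and updated, which is what SavePastState relies on).

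The core of the argument is: suppose at stage $j$ some $\plan_j \models \requirement$ exists. By Lemma~\ref{lemma:incr_to_max}, the \textbf{while} loop of Algorithm~\ref{alg:overall} will, unless it exits earlier with $sat$, eventually advance $k$ to an index with $\dplist[k] \geq \numdp_{min}(\taskset)$ and ultimately to $|\dplist|-1$, at which point the assumption literal $\assumelist[|\dplist|-1] = \Gamma[|\dplist|-1]$ places no restriction on the encoding and the solver runs with $D = D_{max}$ free action points. By Lemma~\ref{lemma:complete_given_max} (Conditional Completeness), under exactly this assumption the algorithm is complete: given $\plan_j \models \requirement$ one can explicitly construct the satisfying assignment $\modeloutput_j$ described in that lemma's proof, so the solver — assumed sound and complete for QF\_UFBV / QF\_UFLIA — must return $sat$. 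Hence either the loop already exited with $sat$ at a smaller $k$, or it reaches the final index and returns $sat$ there; in neither case does it fall through to line~\ref{algline:exitunsat}. Therefore $result_j = sat$, completing the induction.

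One subtlety I would make explicit is why the SavePastState step and the pushed/popped update constraints $\updateencoding$ do not destroy the existence of a satisfying assignment at stage $j$. This is where the inductive hypothesis is used: because $\plan_{j-1}$ returned by the previous iteration is valid and updated, and because $\plan_j$ (the hypothesized plan satisfying $\requirement$) is by definition of $\indexedupdatedpredicate[\tasksetarrivaltime][\plan_{j-1}]$ required to agree with $\plan_{j-1}$ on the saved prefixes, the constraints frozen by SavePastState are consistent with $\plan_j$; and $\updateencoding$ merely restates the timing/identifier relations that any valid updated plan already satisfies. So the assignment constructed from $\plan_j$ as in Lemma~\ref{lemma:complete_given_max} respects the extra frozen constraints as well. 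I expect this consistency-with-the-frozen-prefix check to be the main obstacle — it is the one place where completeness genuinely depends on the particular way $\delta_\agent$ is chosen in SavePastState and on matching the \emph{updated-plan} definition to the encoding — whereas the rest is a routine chaining of Lemmas~\ref{lemma:dp_max}, \ref{lemma:complete_given_max}, and \ref{lemma:incr_to_max} with the loop-termination structure of Algorithm~\ref{alg:overall}.
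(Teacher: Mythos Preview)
Your proposal is correct and follows essentially the same approach as the paper: chain Lemma~\ref{lemma:incr_to_max} (the loop eventually reaches $D = D_{max}$) with Lemma~\ref{lemma:complete_given_max} (conditional completeness at $D_{max}$) to conclude. The paper's proof is in fact just those two sentences; your additional paragraph on why the SavePastState-frozen prefix is consistent with the hypothesized $\plan_j$ is extra detail that the paper folds into the proof of Lemma~\ref{lemma:complete_given_max} rather than the theorem itself.
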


\begin{proof}
    By Lemma \ref{lemma:incr_to_max}, we will reach $\numdp = D_{max}$. By Lemma \ref{lemma:complete_given_max}, we know that given a sufficiently large $\numdp = D_{max}$, the algorithm is complete.
\end{proof}

\section{Experimental Analysis}

In this section, we evaluate the performance of our approach for initial solves and incremental solves for tasks arriving online. Specifically, we aim to answer the following research questions:

\begin{enumerate} [label={}]
    \item \textbf{RQ1}: How does the initial solve scale with respect to the number of tasks, number of agents, and number of action points? How does it scale with respect to different theory encodings?
    \item \textbf{RQ2}: How does incremental solving scale, and is it more effective than non-incremental solving at handling tasks arriving online? How does task set batch size affect the performance?
\end{enumerate}

For the initial solve, we generated benchmarks using the Z3Py API for initial solve in both QF\_UFLIA and QF\_UFBV.
For conciseness, we abbreviate these as LIA and BV, respectively. For incremental solving, we implement the encoding in BV using both Z3Py and the Bitwuzla~\cite{DBLP:conf/cav/NiemetzP23} Python API. For BV benchmarks, variables are encoded using small-domain encoding~\footnote{Link to implementation and benchmarks: \href{https://github.com/victoria-tuck/SMrTa}{https://github.com/victoria-tuck/SMrTa}}. 


Our workspace model represents an indoor setting with hallways and twenty rooms. It aims to reflect the complexities and challenges robots face in navigating complicated indoor settings. System locations were chosen from critical regions and travel time between pairs of locations were calculated with the approach in \cite{ShahS2022aamas}. Agents' start locations and tasks start and end locations are uniformly randomly sampled from the system locations. 
Our evaluation consists of two sets of benchmarks, discussed separately in Section~\ref{subsec:rq1} and \ref{subsec:rq2}. We use Z3, Bitwuzla, and cvc5 as state-of-the-art SMT solvers. 
In the following, we abbreviate $S$-$T$ as a solver setting where solver $S$ runs on some benchmarks in theory $T$.
All experiments were run on an AWS EC2 c5.4xlarge instance with 16 Intel Xeon Platinum cores running at 3.0GHz with 16GB RAM. All solvers were given a 3600 seconds timeout on each query.

\subsection{RQ1: Performance of Initial Solve and Comparison on Theories} \label{subsec:rq1}

\begin{figure}
    \centering
    \includegraphics[width=0.7\linewidth]{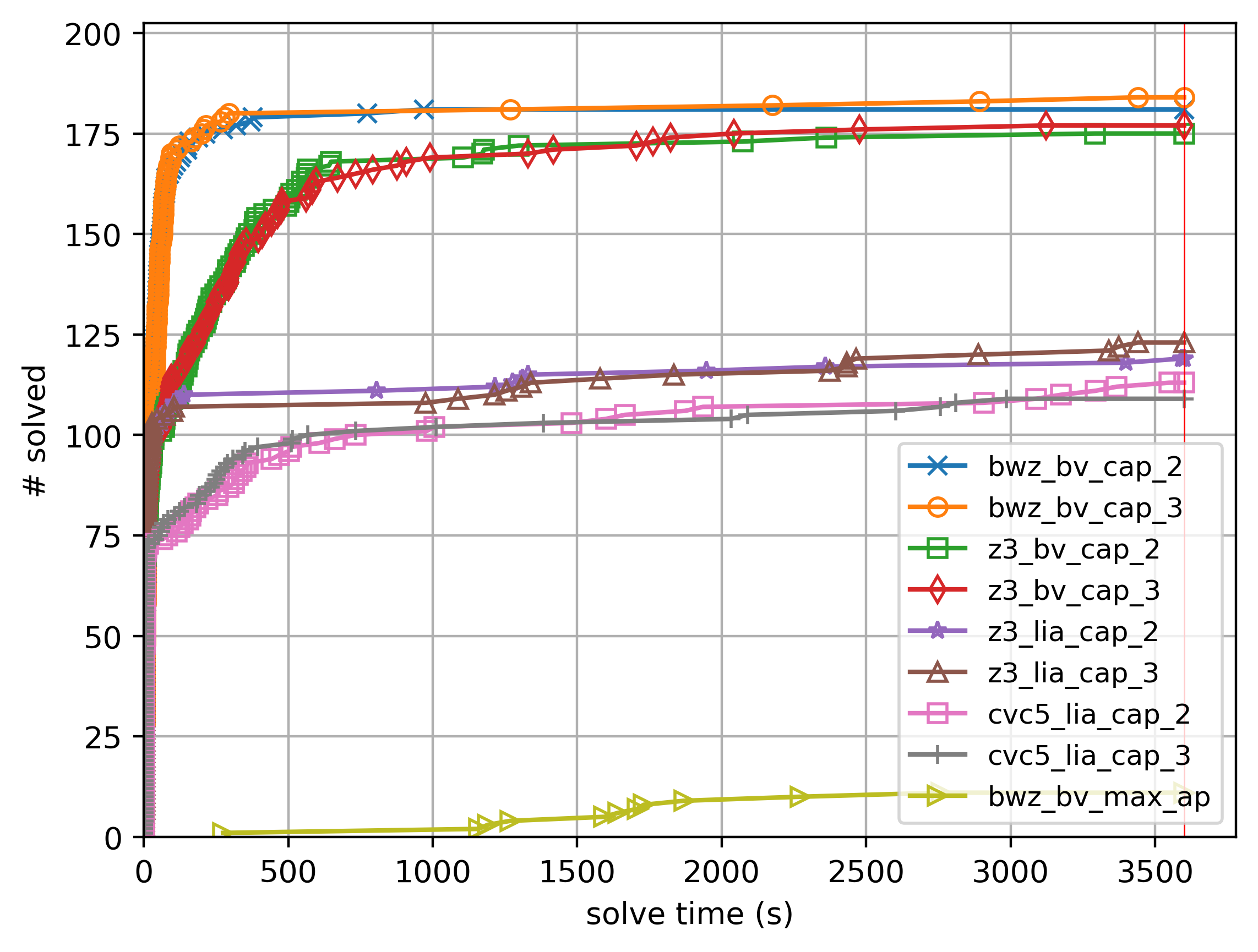}
    \caption{Cactus plot for solver runtimes. Red line represents the 3600s timeout.}
    \label{fig:cactus}
\end{figure}

We create a set of 200 benchmarks with number of tasks ranging from 10 to 30 and number of agents from 5 to 20 both with an interval of 5 and ten instances for each combination. Task deadlines are sampled from a uniform distribution. Minimum number of action points were used. We ran both Z3-BV and Z3-LIA, Bitwuzla-BV, and cvc5-LIA on the initial solve benchmarks with max capacity $c$ equal to 2 or 3. 

Figure~\ref{fig:cactus} shows the cactus plot for each solver. Solvers running on BV benchmarks consistently outperformed those running on LIA. Bitwuzla-BV performed the best, solving 184 and 181 instances, followed by Z3-BV solving 177 and 175 with $c = 3$ and $2$ respectively. Solvers in general seem to perform better with larger max capacity, and we speculatively believe the constraint of $c = 3$ to be easier to solve. 
We also conducted an experiment using the maximum number of action points (\texttt{bwz\_bv\_max\_ap} in Fig.~\ref{fig:cactus}) to show that action point number significantly affects performance.

Figure~\ref{fig:bwz-vs-z3} shows the relation between problem size and solver performance for Bitwuzla-BV and Z3-BV with $c = 3$. In both plots, runtime grows as number of tasks grows, and when the number of tasks is fixed, a larger number of agents, which implies a smaller minimum number of action points, tend to lead to faster runtimes. Bitwuzla-BV generally outperforms Z3-BV on benchmarks with $\numtasks > 25$, while Z3-BV outperforms Bitwuzla-BV on those with $\numtasks <15$.

\begin{figure}
    \centering
    \includegraphics[width=\linewidth]{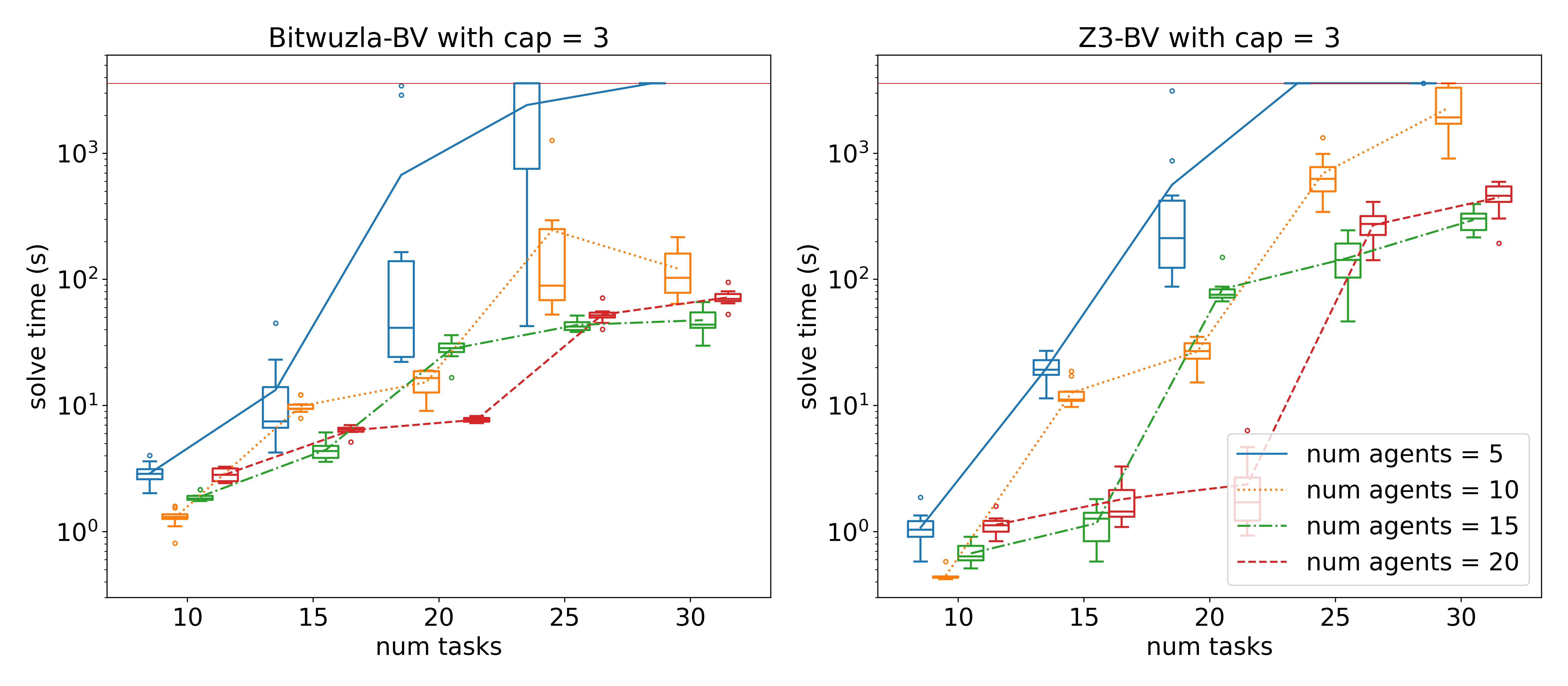}
    \caption{Runtime analysis on Bitwuzla-BV and Z3-BV under multiple settings. Boxes represent quantiles, circles represent outliers, and lines represent means of runtimes. Observe that runtimes are faster when number of agents is larger.}
    \label{fig:bwz-vs-z3}
\end{figure}

\subsection{RQ2: Performance of Incremental Solve} \label{subsec:rq2}

We create a set of 20 benchmarks with 20 agents that simulate a real world scenario where tasks arrive continuously. With a total of 200 tasks, we assume that each task arrives every 8 time units, and expires in $t \sim U(300, 500)$ time units after its arrival.
We use batch (task set) sizes $b \in \{1, 10\}$. For each batch size $b$, the algorithm collects tasks and invokes the solver every $b$ tasks. Due to the superior performances in the initial solve, only Bitwuzla-BV and Z3-BV with $c = 2$ were considered. Tasks are added incrementally via push/pop functionality, and action points are added using assumption variables according to the approach shown in Algorithm \ref{alg:overall}. For non-incremental solving, we copy all assertions of the incremental solver without the pushes/pops to a newly instantiated solver and assert the assumption variables.

We timed the execution of both solvers using incremental solving and non-incremental solving. The number of total free action points across agents was also recorded as an indicator of query difficulty. 
Fig.~\ref{fig:inc-vs-noninc} shows the results of running Bitwuzla-BV and Z3-BV with batch size equal to 1 and 10 with 200 tasks and 20 agents. 
We observed that performances on incremental and non-incremental solving depend greatly on the solver and batch size -- Z3-BV significantly outperforms Bitwuzla-BV on incremental solving especially when batch sizes are small, as shown in the blue line in Fig.~\ref{fig:z3_bv_b_1}, while Bitwuzla-BV performs better on non-incremental solves with larger batch sizes, as shown in the orange line in Fig.~\ref{fig:bwz_bv_b_10}. Empirically, Z3-BV took around 260 seconds on average to solve for 200 tasks. Based on the observations, we suggest using Z3-BV/Bitwuzla-BV with incremental/non-incremental solving when batch sizes are small/large.

Notice that there are peaks in runtime across all settings. These peaks occur when the minimum number of action points required increases. With 20 agents and batch size equal to 1 (10), a peak occurs every 20 (2) batches as an additional action point is needed every 20 tasks. We speculate this to be due to an increase of the search space when extra action points are introduced (shown via red lines in Fig.~\ref{fig:inc-vs-noninc}). Note that runtime does not correlate to the number of action points that have to be assigned to complete all available tasks, which is an indicator of the number of un/re-assigned tasks (shown in green in Fig.~\ref{fig:inc-vs-noninc}).

In comparison to heuristic-based approaches~\cite{sarkar2018scalable}, those approaches will be faster but lack the guarantees of our approach. Additionally, when comparing others with guarantees~\cite{okubo2022simultaneous,chen2021integrated}, these lack runtime information. 

\begin{figure}
    \centering
    \begin{subfigure}[b]{0.45\textwidth}
        \centering
        \includegraphics[width=\textwidth]{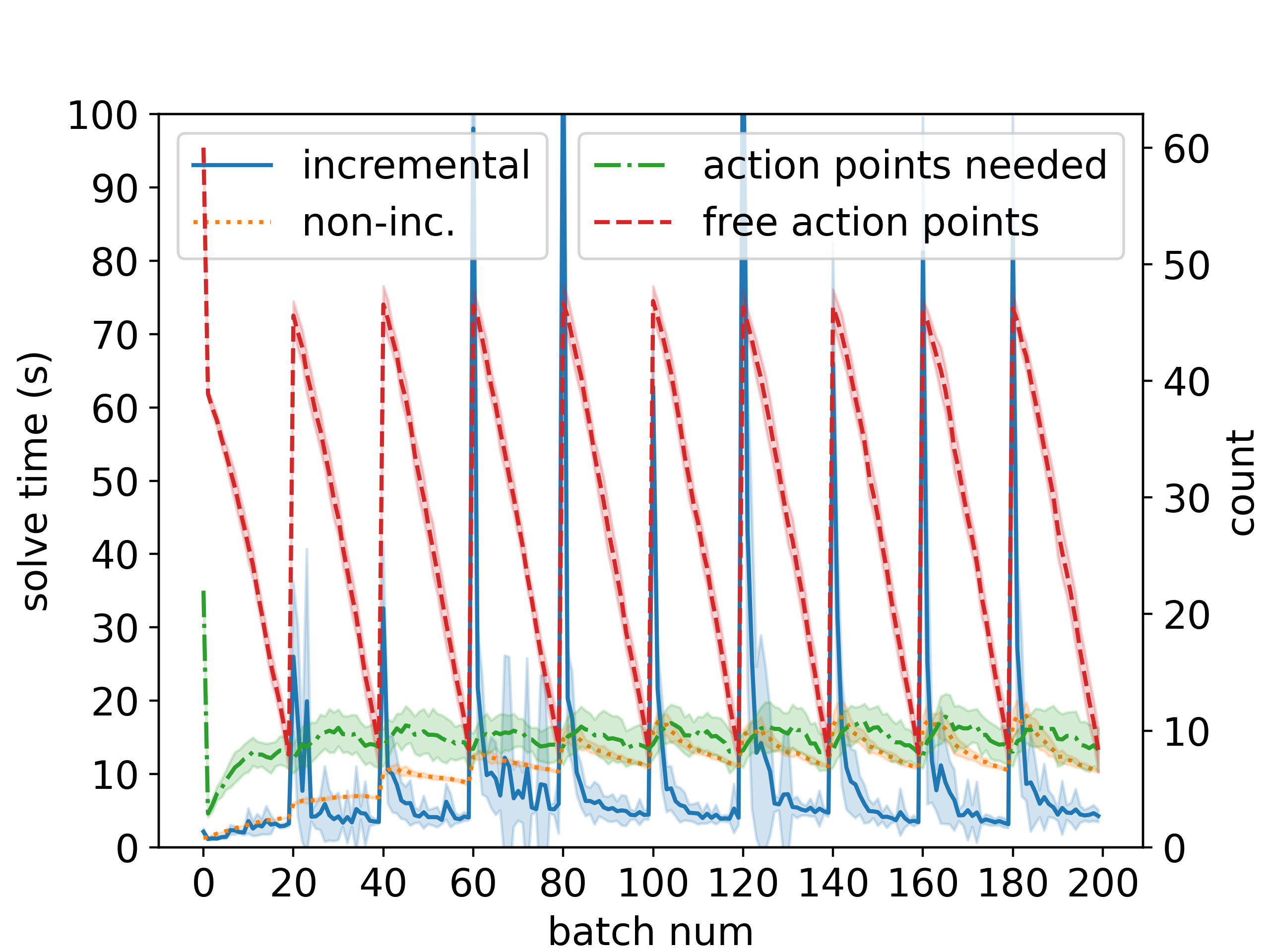}
        \caption{Bitwuzla-BV with batch size = 1.}
        \label{fig:bwz_bv_b_1}
    \end{subfigure}
    \begin{subfigure}[b]{0.45\textwidth}
         \centering
         \hfill 
         \includegraphics[width=\textwidth]{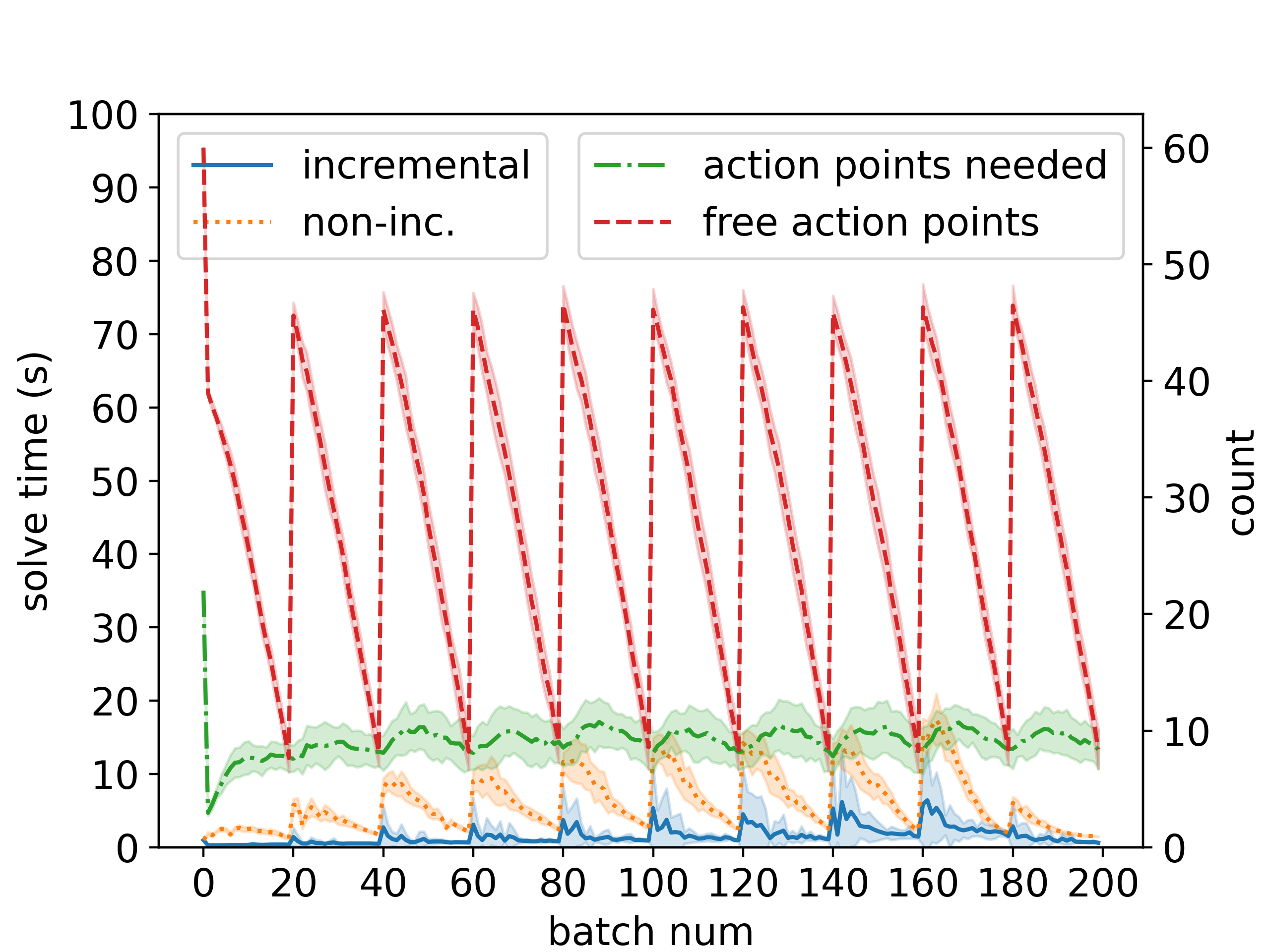}
         \caption{Z3-BV with batch size = 1.}
         \label{fig:z3_bv_b_1}
     \end{subfigure}
     \begin{subfigure}[b]{0.45\textwidth}
        \centering
        \includegraphics[width=\textwidth]{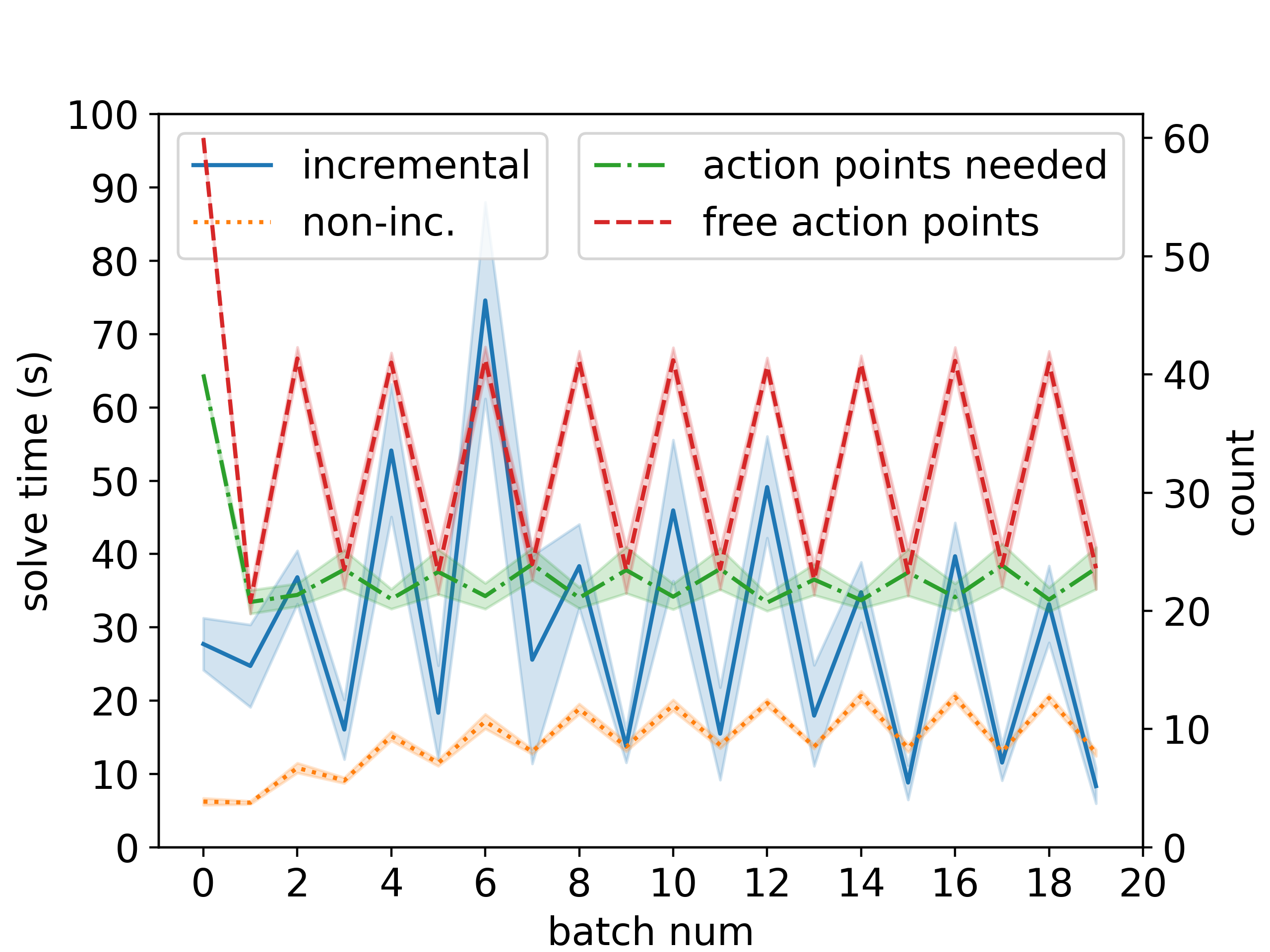}
        \caption{Bitwuzla-BV with batch size = 10.}
        \label{fig:bwz_bv_b_10}
    \end{subfigure}
    \begin{subfigure}[b]{0.45\textwidth}
         \centering
         \hfill 
         \includegraphics[width=\textwidth]{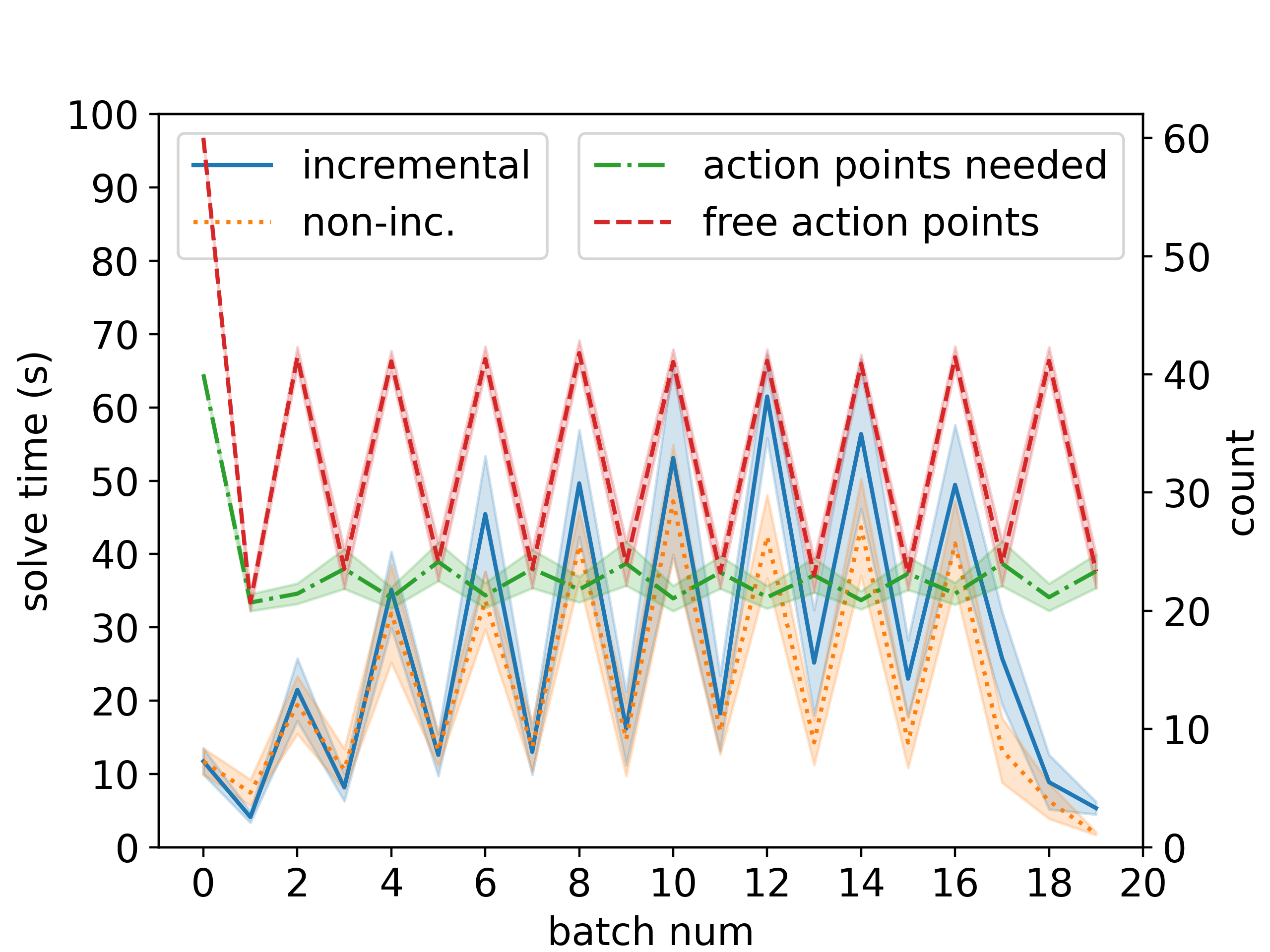}
         \caption{Z3-BV with batch size = 10.}
         \label{fig:z3_bv_b_10}
     \end{subfigure}
    \caption{Performance comparison for incremental v.s. non-incremental solves with $cap = 2$.}
    \label{fig:inc-vs-noninc}
\end{figure}

\section{Conclusion and Future Work}

In this work, we present a SMT-based approach to the problem of Dynamic Multi-Robot Task Allocation with capacitated agents. Our algorithm handles online tasks and iteratively adjusts the size of the problem in order to manage computational complexity. We show its efficacy on problems of up to 20 agents and 200 tasks, showing the potential for our approach to be used in longer settings. Future work includes extending to stochastic settings to better accommodate dynamic environments where a probabilistic guarantee of adherence is desired potentially using probabilistic logics \cite{hansson1995}. Further extensions include allowing agents to pass objects to one another, and, specifically for hospital settings, representing complicated features like elevators. We also believe that this problem setting can act as an SMT benchmark because, for example, without managing action points as in our approach, large numbers of action points do create problems that are very difficult to solve as shown in Fig. \ref{fig:cactus}. In conclusion, we show SMT-based approaches can be useful in handling the computational complexity of this combinatorial domain in an extendable way and that our framework provides a baseline for further study of this area.

\subsubsection*{Acknowledgements.}

This work was supported in part by LOGiCS: Learning-Driven Oracle-Guided Compositional Symbiotic Design for Cyber-Physical Systems, Defense Advanced Research Projects Agency award number FA8750-20-C-0156; by Provably Correct Design of Adaptive Hybrid Neuro-Symbolic Cyber Physical Systems, Defense Advanced Research Projects Agency award number FA8750-23-C-0080; by Toyota under the iCyPhy Center; and by Berkeley Deep Drive.

%
\bibliographystyle{abbrv}
\bibliography{refs}

\appendix

\section{Soundness} \label{appendix:soundness}

\subsection{Proof of Lemma \ref{lemma:consistent_action_seq}: Consistency of Action Sequences.} \label{appendix:lemma_1}

Lemma statement: If $result_0$ is sat, $\plan_0 \models \allconsistentpredicate$.

\begin{proof}
$(result_0 = sat) \Rightarrow \modeloutput_0$ is a satisfying assignment to the encoding $\encoding_0$. The construction of $\plan_0$ is deterministic, so we will only get one possible $\plan_0$ from an assignment.

Consider any agent's action sequence $\actionsequence \in \plan_0$. If an agent's action sequence $\actionsequence$ is empty, $\actionsequence \models \consistentpredicate$. If it is not, let its length be $\sequenceindex$. By construction, $((\sequenceelement = \pick_{\task}) \Rightarrow (\prevsequenceelement = \move_{\taskstartloc})) \wedge ((\sequenceelement = \drop_{\task}) \Rightarrow (\prevsequenceelement = \move_{\taskendloc})) \ \forall \task \in \taskset$ and $(\sequenceelement[1] \in \moveset)$.

Taking each action point $\dpvar > 0$, if $\validid$ is true, the point will cause the action pair $\{(\move, \taskstartloc), (\pick, \taskid) \}$ or $ \{(\move, \taskendloc), (\drop, \taskid) \}$ to be added to the action sequence $\actionsequence$. Otherwise, due to \ref{enc:at_home_or_doing_task_with_id}, $\dpid=\agentid$. In constructing $\actionsequence$ following GetPlan($\modeloutput_0$), we start with the 1st action point and stop adding elements to the $\actionsequence$ once $\dpid = \agentid$. Therefore, only the pick and drop action pairs can be added, so $\forall \kappa = 1, \ldots, k-1 \ \sequenceelement[\kappa] \in \moveset \Rightarrow \sequenceelement[\kappa + 1] \in (\pickset \cup \dropset)$.  $(\pickset \cup \dropset) \cap \moveset = \emptyset$, so $\forall \kappa = 1, \ldots, k-1 \ \sequenceelement[\kappa] \in \moveset \Rightarrow \sequenceelement[\kappa + 1] \notin \moveset)$ .

Because we only add the pairs mentioned earlier to the action sequence, for any action point $\dpvar$ and task $\task$, if $\dpid = 2\taskid + \numagents$, then $\sequenceelement[2\dpvar] = (\pick, \taskid)$. Due to \ref{enc:started_task}, $\dpid[\agent][\dpvar'] = 2\taskid + \numagents + 1$ for some $\dpvar' > \dpvar$. $\dpid = 2\taskid + \numagents + 1$ implies $\sequenceelement[2\dpvar'] = (\drop, \taskid)$. $\dpvar' > \dpvar \Rightarrow 2\dpvar' > 2\dpvar$, so tasks will be picked before they are dropped ($(\sequenceelement[\kappa] = \pick_{\task}) \Rightarrow (\bigvee \limits_{\kappa' > \kappa} \sequenceelement[\kappa'] = \drop_\agent)$ holds where $\kappa = 2\dpvar$ and $\kappa' = 2\dpvar'$). A similar argument can be made with \ref{enc:started_task} - \ref{enc:assigned_agent} to show tasks are dropped after they are picked.

We now show that the action point times $\dptime$ across the same agent that will be used to create the agent's action sequence are unique in order to show that picks and drops only occur once. More formally, there exists some $0 < \numdp_{\agent}' \leq \numdp$ where 
\begin{align}
    \forall d = 1, \ldots, \numdp_{\agent}'-1, \quad & \dpid \neq \agentid \label{eq:below_Dprime_neq_n} \\
    \forall d' = 1, \ldots, \numdp_{\agent}'-1 \text{ st. } d \neq d', \quad & \dptime \neq \dptime[\agent][d'] \label{eq:below_Dprime_uniquetimes}
\end{align}
and 
\begin{equation}
    \forall \ d'' \text{ st. } \numdp_{\agent}' \leq d'' < D, \quad \dpid[\agent][d'']=\agentid. \label{eq:above_Dprime_equaln}
\end{equation} 
From \ref{enc:room_UF_agents_and_dist_UF}, \ref{enc:at_home_or_doing_task_with_id}, \ref{enc:room_UF_task_rooms} and $w_{\locationid_1, \locationid_2} \geq 0 \ \forall, \locationid_1, \locationid_2 \in \locationidset$, $Dist(Loc(\prevdpid), Loc(\dpid)) \geq 0$. Now we can find $\numdp_{\agent}'$. Start with $d=1$. If $\dpid=\agentid$, via \ref{enc:stay_home}, eq. \ref{eq:below_Dprime_neq_n}, \ref{eq:below_Dprime_uniquetimes}, and \ref{eq:above_Dprime_equaln} trivially hold with $\numdp_{\agent}'=1$. For all $d>1$, if $\dpid \neq \agent$, $\dpid \geq \numagents$ by \ref{enc:at_home_or_doing_task_with_id}. In \ref{enc:doing_tasks}, $ITE(\prevdptime \leq \taskarrivaltime, \taskarrivaltime, \prevdptime) \geq \prevdptime$. By definition, $\rho > 0$, so by \ref{enc:doing_tasks}, $\dptime > \prevdptime$. This sequence of times is strictly monotonically increasing, so all values are unique. If for any action point $d>1$, $\dpid=\agent$, let $\numdp_{\agent}'=d$ and by \ref{enc:stay_home}, eq. \ref{eq:above_Dprime_equaln} holds. Eq. \ref{eq:below_Dprime_neq_n} and \ref{eq:below_Dprime_uniquetimes} hold by the above argument of strictly monotonically increasing sequence. If for no action point $d>1$, $\dpid=\agent$, let $\numdp_{\agent}'=\numdp$ and eq. \ref{eq:above_Dprime_equaln} trivially holds while eq. \ref{eq:below_Dprime_neq_n} and \ref{eq:below_Dprime_uniquetimes} hold by the above argument.

We can now show 
\begin{align*}
    (\sequenceelement[\kappa] = \pick_{\task}) \Rightarrow & (\forall \kappa' \neq \kappa \in 1, \ldots  \sequenceindex, \ \sequenceelement[\kappa'] \neq \pick_{\task}) \ \forall \task \in \taskset \\
    (\sequenceelement[\kappa] = \drop_{\task}) \Rightarrow & (\forall \kappa' \neq \kappa \in 1, \ldots  \sequenceindex, \ \sequenceelement[\kappa'] \neq \drop_{\task}) \ \forall \task \in \taskset
\end{align*}
for each agent $\agent$. We will only consider the action points for each agent that are used to create its action sequence ($d=0, \ldots, \numdp_{\agent}'-1$). We have shown that these times $\dptime$ are unique. Due to \ref{enc:started_task}, $\dpid = 2 \taskid + \numagents \Rightarrow \taskstarttime = \dptime$ (if an action point is assigned a pick up task id, $\taskstarttime = \dptime$ will be assigned the time associated with that id). If two action points $\dpvar$ and $\dpvar'$ for one agent are assigned the same task id, $\taskstarttime$ must be assigned both $\dptime$ and $\dptime[\agent][\dpvar']$. However, this is impossible as $\dptime \neq \dptime[\agent][\dpvar']$. Therefore, only one action point for each agent can be assigned the same pick up task id. A similar argument can be made using \ref{enc:ended_task} for drop off task ids. When converted to an action sequence, only one pick or drop action is added to the action sequence for each action point, so the pick and drop actions will only occur once.

We now show that loads will not violate specifications. If the action sequence is non-empty, by construction, $\seqcapacity_{1}(\actionsequence) = 0$ and $\seqcapacity_{k} (\actionsequence) = \dpload$ where $d = \lfloor \frac{k}{2}\rfloor$ due to \ref{enc:load_tracking_and_doing_tasks}. These load values are constrained in the encoding by $\ref{enc:bound_load_bound_id_go_home}$. The constraints can be similarly shown for all other agents.

\end{proof}

\subsection{Proof of Lemma \ref{lemma:completed_tasks}: Completion of Tasks.} \label{appendix:lemma_2}

Lemma statement: If $result_0$ is sat, $\plan_0 \models \allcompletedpredicate[\taskset]$ for $\taskset = \taskset_0$. 

\begin{proof}
    We need to show $ \plan_0 \models \completepredicate \forall \task \in \taskset_0$. Without loss of generality, we will show for a single task $\task \in \taskset_0$ that $\exists \ \actionsequence \in \plan$ such that each statement in $\completepredicate$ holds.
    
    The action sequence is consistent ($\actionsequence \models \consistentpredicate)$ by Lemma \ref{lemma:consistent_action_seq}.

    We show the object for task $\task$ is picked up and dropped off by agent $\agent \ (\phi_{pickup}(\task, \actionsequence) \wedge \phi_{dropoff}(\task, \actionsequence))$: By \ref{enc:valid_agent_and_times}, for the task $\task$, $(\taskagent \geq 0) \wedge (\taskagent < \numagents)$. WLOG, assume $\taskagent = \agent$. By $\ref{enc:assigned_agent}, \exists \ \dpvar \in [1, \numdp - 1] \ \text{st.} \ \dpid = 2\task + \numagents.$ Therefore, via GetPlan($\modeloutput$), $\pick_{\task}$ is added to $\actionsequence$. Similarly, using $\ref{enc:started_task}$, $\drop_{\task}$ is added with an action point $\dpvar' = 2 \task + N + 1$.

    We show no other agent will pick up the task ($\forall \otheractionsequence \neq \actionsequence \in \plan, \neg \phi_{pickup}(\task,$ $ \otheractionsequence) \wedge \neg \phi_{dropoff}(\task, \otheractionsequence)$): Via $\ref{enc:started_task}$ and $\ref{enc:ended_task}$, $\dpid[\agent'][\dpvar] = 2 \agent + N$ or $\dpid[\agent'][\dpvar] = 2 \agent + N + 1$ will imply $\taskagent = \agent'$. It can only be assigned one value. Therefore, neither the pick-up or drop-off of task $\task$ will be assigned to another agent $\agent'$ or appear in that agent's action sequence.
    
    We show valid durations ($\duration(\sequenceelement[\kappa_{d}]) \leq \taskdeadline) \wedge (\duration(\sequenceelement[\kappa_{p-2}]) \geq \taskarrivaltime)$: Via $\ref{enc:stay_home}, \prevdpid \neq \agent$ unless $d-1=0$. We previously assume $\tasksetarrivaltime = 0.$ Therefore,
    \begin{align*}
        \dptime - \prevdptime = & \ Dist(Loc(\prevdpid), Loc(\dpid)) + \rho \quad [\text{Via } \ref{enc:doing_tasks}] \\
        = & \ w_{Loc(\prevdpid), Loc(\dpid)} + \rho \quad [\text{Via } \ref{enc:room_UF_agents_and_dist_UF}] \\
        = & \ \duration_k(\actionsequence) - \duration_{k-2}(\actionsequence) \quad [\text{Via } \ref{enc:room_UF_agents_and_dist_UF}, \ref{enc:room_UF_task_rooms}]
    \end{align*}
    with $k = 2d$. Via \ref{enc:init_and_allowed_dp}, $\dptime[\agent][0] = 0$, so we can map durations of the subsequences up to and including pick and drop actions to action point times. Times are constrained in $\ref{enc:valid_agent_and_times}$. As a note, $\duration_{k-1}(\actionsequence) = \dptime[\agent][\lceil \frac{k}{2} \rceil] - \rho$, which is constrained in $\ref{enc:valid_agent_and_times}$ but we also want to show the stronger point that the agent does not even move towards the task until it knows about it. Duration is a sum of non-negative quantities and $\taskarrivaltime = 0$, so the second duration constraint holds.

\end{proof}

\subsection{Proof of Lemma \ref{lemma:updated_plans}: Plans are Updated.} \label{appendix:lemma_3}

Lemma statement: Assume $(result_j = sat) \Rightarrow (\plan_j \models \requirement)$. If $result_{j+1}$ is sat, $\plan_{j+1} \models \indexedupdatedpredicate[t_{j+1}][\plan_j]$.

\begin{proof}
    If $result_j$ is not sat, the statement is trivially true as the loop will exit on line $\ref{algline:exitunsat}$ before $result_{j+1}$ can be set to sat.

    For the rest of the proof, we assume $result_j$ is sat. Therefore, the right-hand side of the implication is true, so we are updating from a valid plan $\plan_{j}$. Consider the action sequences of the plan $\plan_{j}$. For each agent, we split into two cases: 1) $\duration_k(\actionsequence) < t_{j+1}$ and 2) $\duration_k(\actionsequence) \geq t_{j+1}$ where $\sequenceindex$ is the length of $\actionsequence$.

    Start with case 1. We note that via the procedure in GetPlan($\cdot$), $\duration_{\kappa}(\actionsequence)$ will equal an action tuple time for all $\kappa$'s where $\sequenceelement \in \pickset \cup \dropset$. Therefore, $\duration_k(\actionsequence) < t_{j+1}$ means that $\modeloutput_j(\dptime) \geq t_{j+1}$ will never be true and SavePastState($\cdot$) will stop at some $\delta_{\agent} = \dpvar + 1$ when $\dpvar+1 < \numdp$ and $\dpid[\agent][\dpvar + 1] = \agentid$. Therefore, all action points $\dpvar = 0, \ldots, \delta_{\agent}$ will be equivalent between the two assignments creating a prefix of the new action sequence that is equivalent to the previous action sequence. At this point, if there are no new action tuple times for the new assignment $\modeloutput_{j+1}$ i.e., $\dpid[\agent][\delta_{\agent}] = \agentid$, the statement is true. If $\dpid[\agent][\delta_{\agent}] \neq \agentid$, we note that $\dptime[\agent][\delta-1] < t_{j+1}$. Via $\ref{enc:doing_tasks}$ and lines $\ref{algline:popoff}$ and $\ref{algline:addtasks}$ in Algorithm 1, $\dptime[\agent][\delta_{\agent}] = t_{j+1} + w_{Loc(\dpid[\agent][\delta_{\agent}]), Loc(\dpid[\agent][\delta_{\agent}-1])} + \rho$ which causes GetPlan($\cdot$) to add the specified wait. If $\dpvar$ reaches the end of the loop past $\numdp - 1$, no new action points will be able to be assigned, so the sequence will be the same.

    Now we consider case 2. In SavePastState($\cdot$), if we instead loop until $\modeloutput(\dptime) \geq t_{j+1}$, $\delta_{\agent} = \dpvar$ for the corresponding action point $\dpvar$. We have already shown an equivalent between durations and action tuple times, and this will now hold until some part of the sequence where $\duration_k(\actionsequence) \geq t_{j+1}$. By construction, this point will exist for an element where $\sequenceelement \in \pickset \cup \dropset$ and there will not be excess waits.
\end{proof}

\subsection{Proof of Lemma \ref{lemma:updated_consistency}: Updated Consistency.} \label{appendix:lemma_4}

Lemma statement: Assume $(result_j = sat) \Rightarrow (\plan_j \models \requirement)$. If $result_{j+1}$ is sat, $\plan_{j+1} \models \allconsistentpredicate$.

\begin{proof}
    If $result_j$ is not sat, the statement is trivially true as above. If $t_{j+1} = 0$, due to SavePastState($t_{j+1}, \encoding, \modeloutput$), $\delta = \mathbf{1}$ (a vector of one's) as when $j=0$, so $\ref{enc:doing_tasks}$ is removed and re-added exactly the same. The rest of the encoding adjustments can be thought of as changing the starting set to be $\taskset_0 = \bigcup \limits_{j'=0}^{j+1} \taskset_{j'}$ and the proof of lemma \ref{lemma:consistent_action_seq} follows.
    
    If $t_{j+1} > 0$, many of the statements of \ref{appendix:lemma_1} still hold. We note that the action tuples saved in SavePastState($\cdot$) correspond to a prefix of each action sequence in the previous plan that is also consistent. An action point may cause $(\wait, t)$ for a $t>0$ value as specified in GetPlan($\cdot$) to be added after the part of the plan that is the same as the previous plan in addition to the pick and drop actions mentioned before. Therefore, $\sequenceelement[\kappa] \in \moveset \Rightarrow \sequenceelement[\kappa + 1] \notin \moveset) \ \forall \kappa = 1, \ldots, k-1$ still holds.

    Let $w_{\kappa} = | \{\sequenceelement[\kappa] \in \waitset | \sequenceelement[\kappa] \in \prefix \} |$. Let $\kappa$ denote the prefix such that $\sequenceelement[\kappa] = \pick_{\task}$ and $\kappa'$ denote the same for the drop. Using the same definitions for $\dpvar'$ and $\dpvar$ as before, $\dpvar' > \dpvar$, so $\kappa = 2\dpvar + w_{\kappa}$ and $\kappa' \geq 2\dpvar + w_{\kappa}$ because the actions for $\dpvar'$ are added after those for $\dpvar$. This is similarly true for showing a task is dropped after it's picked up.

    We have previously shown action point times $\dptime$ are strictly monotonically increasing, even if $\tasksetarrivaltime \neq 0$. The difference here is that $t_{j+1}$ may be different from $t_j$, so we want to remove the previous $\ref{enc:doing_tasks}$ constraints. $\dptime$ are strictly monotonically increasing for $\encoding_j$. Consider saving the state for agent $\agent$ in SavePastState($\cdot$) and that $\delta_{\agent} = \delta$. Given our assumption, the sequence $\dptime[\agent][0], \ldots, \dptime[\agent][\delta - 1]$ will be strictly monotonically increasing. New action point times for $d \geq \delta - 1$ will also be monotonically increasing via $\ref{enc:doing_tasks}$. Therefore, the pick and drop actions will still only occur once.

    Load values are similarly still constrained with the adjustment that load values from some action points will map to three actions instead of two.
\end{proof}

\subsection{Proof of Lemma \ref{lemma:updated_completion}: Updated Completion.} \label{appendix:lemma_5}

Lemma statement: Assume $(result_j = sat) \Rightarrow (\plan_j \models \requirement)$. If $result_{j+1}$ is sat, $\plan_{j+1} \models \allcompletedpredicate[j+1]$ where $\cumtaskset_{j+1} =(\bigcup_{j'=0}^{j+1} \taskset_{j'})$.

\begin{proof}
    If $result_j$ is not sat, the statement is trivially true as above. By lemma $\ref{lemma:updated_consistency}$, each agent's action sequence is consistent. Pretend that $\tasksetarrivaltime = 0$. In lines $\ref{algline:popoff}$ and $\ref{algline:addtasks}$, \ref{enc:doing_tasks} and \ref{enc:at_home_or_doing_task_with_id} are removed and re-added the exact same and \ref{enc:room_UF_task_rooms} - \ref{enc:valid_agent_and_times} are added for $\indexedtaskset$. This encoding is the same as having $\tasksetinit = \cumtaskset_{j+1}$. (We'll disregard that lemmas may have been added during the previous solve as these should only follow from what is in the encoding). We have already shown that such a plan completes the tasks.

    The difference here is from SavePastState($\cdot$) and the introduction of $\tasksetarrivaltime$ in $\ref{enc:doing_tasks}$ and $\ref{enc:valid_agent_and_times}$, which will only affect the duration requirements ($(\duration(\sequenceelement[\kappa_{d}]) \leq \taskdeadline) \wedge (\duration(\sequenceelement[\kappa_{p-2}]) \geq \taskarrivaltime) \ \forall \task \in \cumtaskset_{j+1}$). The requirement on $\tasksetarrivaltime$ in $\ref{enc:valid_agent_and_times}$ is used mainly as a double check. We have previously shown an equivalence between action point times and durations. The previous assignment completed its tasks so the points saved in  SavePastState($\cdot$) will not break $(\duration(\sequenceelement[\kappa_{d}]) \leq \taskdeadline)$ or $(\duration(\sequenceelement[\kappa_{p-2}]) \geq \taskarrivaltime)$. For any new action points, $\ref{enc:doing_tasks}$ will hold, so $(\duration(\sequenceelement[\kappa_{p-2}]) \geq \taskarrivaltime)$ will be true. $\ref{enc:valid_agent_and_times}$ is still required for previous tasks and will be added for new tasks, fulfilling $(\duration(\sequenceelement[\kappa_{p}]) \leq \taskdeadline)$.

\end{proof}

\section{Completeness} \label{appendix:completeness}

\subsection{Proof of Lemma \ref{lemma:dp_max}: Maximum Number of Action Points.} \label{appendix:lemma_6}

As shown in Lemma \ref{lemma:consistent_action_seq}, pick and drop ids will only occur once per task in an agent's assignment in a satisfying $\modeloutput$. By construction, action point ids must be either $\dpid = \agent$ or $\dpid \geq \numagents \wedge \dpid < 2\numtasks + \numagents$. Therefore, for an agent $\agent$, the maximum number of action points that can be assigned to a value other than $\agent$ is $2\numtasks$. Adding in the constrained 0th action point, the total is $2\numtasks + 1 = D_{max}$ where $\dpid = n$ for $d = d' \geq D_{max}$. Therefore, adding an extra action point does not add new free variables, so an unsat result cannot turn sat by adding more action points.

\subsection{Proof of Lemma \ref{lemma:complete_given_max}: Conditional Completeness.} \label{appendix:lemma_7}

Lemma statement: Assume $\Gamma[|\dplist| -1]$. If $D=D_{max}$, Algorithm~\ref{alg:overall} is complete.

\begin{proof}
    We want to show that for each iteration $j$ in Algorithm 1, if given a plan $\plan_{j}$ for which $\plan_{j} \models \Phi^j$, we can find a satisfying assignment $\modeloutput_j$. We will first construct the assignment. Take the action sequence for each agent $\agent$. Set the initial action point $\dptuple[\agent][0] = (\agentid, \initload, \inittime)$. Find the 1-indexed subsequence of indices of pick and drop actions. For each element $k$ in the subsequence, let $\dpid[\agent][k] = 2\taskid + \numagents$ or $\dpid[\agent][k] = 2\taskid + \numagents + 1$ where $\taskid$ is the number of the task that is picked or dropped, respectively. Set $\dptime[\agent][k] = \duration_k(\actionsequence)$ and $\dpload[\agent] = \sequenceload_k(\sequenceelement[k])$. For the task $\taskid$, if $\sequenceelement \in \pickset$, set $\taskstarttime = \duration_k(\actionsequence)$. If $\sequenceelement \in \dropset$, set $\taskendtime = \duration_k(\actionsequence)$. Set $\taskagent = \agentid$. Set all remaining action points to $(\agentid, t_{max}, 0)$.

    We can now go through each constraint in the encoding to check the assignment is indeed satisfying. By constraining the initial point $\ref{enc:init_and_allowed_dp}$ holds (The second part is true by lemma assumption). $\ref{enc:stay_home}$, $\ref{enc:room_UF_agents_and_dist_UF}$, and $\ref{enc:room_UF_task_rooms}$ hold by construction. The left side of $\ref{enc:load_tracking_and_doing_tasks}$ holds by constraining $\dpload[\agent][0]=0$ then each $\dpload$ is equal to the same sum that is used for load in the action sequence. The right side holds because the consistent action sequence definition constrains each pick and drop to only occur once ($(\sequenceelement[\kappa] = \pick_{\task}) \Rightarrow  (\forall \kappa' \neq \kappa \in 1, \ldots  \sequenceindex, \ \sequenceelement[\kappa'] \neq \pick_{\task}$ for pick). We previously showed an equivalence between the action tuple times and durations as an output of GetPlan($\cdot$). The plan we are updating from ($\plan_{j-1}$) is an output of GetPlan($\cdot$). Therefore, when calling SavePastState($\cdot$), we will have two cases as in the proof of lemma $\ref{lemma:updated_plans}$. If the duration of the common prefix $\duration_k(\actionsequence) < \tasksetarrivaltime$ and the new action sequence is exactly the old, all of the unconstrained $\dpid=\agentid$, so \ref{enc:doing_tasks} holds. If the new sequence is not exactly the old, there will be a wait of length $\tasksetarrivaltime - \duration_k(\actionsequence)$, setting the first free time $\dptime[\agent][\delta_{\agent}] = \duration_{k+1}(\actionsequence) = \duration_k(\actionsequence) + (\tasksetarrivaltime - \duration_k(\actionsequence)) + w_{Loc(i^{\agent}_{\delta_{\agent}}),Loc(i^{\agent}_{\delta_{\agent}-1})} + \rho$. The other times similarly follow from the durations. If the common prefix duration $\duration_k(\actionsequence) \geq \tasksetarrivaltime$, $\delta_{\agent}$ will be set such that $t^{\agent}_{\delta_{\agent}-1} \geq \tasksetarrivaltime$. The difference in pairs of durations for the times added for $t^{\agent}_{\delta_{\agent}-1}$ and after will equal the travel time plus $\rho$ as in $\ref{enc:doing_tasks}$, so it will hold for all cases. $\ref{enc:bound_load_bound_id_go_home}$ holds because $\dpload$ is assigned based on loads of the action sequence, which are constrained in the definition of a consistent action sequence. Non-negative ids and $(\dpid = \agentid) \Rightarrow (\dptime = t_{max})$ are true by construction. $\ref{enc:at_home_or_doing_task_with_id}$ follows from the definition of the pick and drop sets and their conversion to the ids of the encoding. $\ref{enc:started_task}$ and $\ref{enc:ended_task}$ follow from construction and the requirements of the consistent action sequence. $\ref{enc:assigned_agent}$ holds because all tasks are constrained to be completed so all $\taskagent$ variables will be set when constructing. $\ref{enc:valid_agent_and_times}$ holds by the definition of a plan being for a set of agents $\agentset$ and the requirements in completed task on when a task is started and when it is ended.

\end{proof}

\subsection{Proof of Lemma \ref{lemma:incr_to_max}: Increment to $D_{max}$.} \label{appendix:lemma_8}

An action point $d$ is free if it is not constrained to have $\dpid = \agentid$.  By inspection we see that the while loop in Lines \ref{line:dp_increase1} and \ref{line:dp_increase2} increases the index into the action point list which changes the assumes until the last one which then places no restriction on the encoding. By construction, the encoding can use all $D_{max}$ action points when no assumptions are present.

\end{document}